\newcommand\blfootnote[1]{%
  \begingroup
  \renewcommand\thefootnote{}\footnote{#1}%
  \addtocounter{footnote}{-1}%
  \endgroup
}
\newcommand\numberthis{\addtocounter{equation}{1}\tag{\theequation}}
\newcommand{\minhash}{\mathrm{minHash}}
\newcommand{\RMSE}{\mathrm{RMSE}}
\newcommand{\lifthash}{\mathrm{liftHash}}
\newcommand{\drophash}{\mathrm{dropHash}}
\newcommand{\mdrophash}{\mathrm{multipleDropHash}}
\newcommand{\mlifthash}{\mathrm{multipleLiftHash}}
\newtheorem{example}[theorem]{Example}
\declaretheorem[name=Proposition]{prop}
\setlist[enumerate,1]{label=\textit{\alph*)}}
\title{Minwise-Independent Permutations with Insertion and Deletion of Features}
\author{Rameshwar Pratap~\footnote{Corresponding author.}\and
        Raghav Kulkarni}
\authorrunning{Pratap \and Kulkarni} 
\institute{
          Rameshwar Pratap \at
          IIT Hyderabad, Telangana, India \\
          \email{rameshwar@cse.iith.ac.in}      
          \and
         Raghav Kulkarni\at
            Chennai Mathematical Institute, Chennai, TamilNadu, India \\
            \email{kulraghav@gmail.com}
}
\begin{document}

\maketitle

\begin{abstract}

In their seminal work, Broder \textit{et. al.}~\citep{BroderCFM98} introduces the $\mathrm{minHash}$ algorithm that computes a low-dimensional sketch of high-dimensional binary data that closely approximates pairwise Jaccard similarity. Since its invention, $\mathrm{minHash}$ has been commonly used by practitioners in various big data applications.  Further, the data is dynamic in many real-life scenarios, and their feature sets evolve over time.  We consider the case when features are dynamically inserted and deleted in the dataset.   We note that a naive solution to this problem is to repeatedly recompute $\mathrm{minHash}$ with respect to the updated dimension. However, this is an expensive task as it requires generating fresh random permutations. To the best of our knowledge, no systematic study of $\mathrm{minHash}$ is recorded in the context of dynamic insertion and deletion of features. In this work, we initiate this study and suggest algorithms that make the $\mathrm{minHash}$ sketches adaptable to dynamic insertion and deletion of features. We show a rigorous theoretical analysis of our algorithms and complement it with extensive experiments on several real-world datasets. Empirically we observe a significant speed-up in the running time while simultaneously offering comparable performance with respect to running $\mathrm{minHash}$ from scratch. Our proposal is efficient, accurate, and easy to implement in practice.

\end{abstract}

\keywords{Sketching algorithms \and Jaccard similarity estimation \and Streaming algorithms \and Locality sensitive hashing (LSH).}

\section{Introduction}

 {\color{black} The seminal work of Broder~\textit{et.~al.}~\citep{BroderCFM98}  suggests the $\minhash$ algorithm that computes a low-dimensional representation (or \textit{sketch}) of the high-dimensional binary data that closely approximates the underlying pairwise Jaccard similarity.} The Jaccard similarity between two binary data points $X, Y \in\{0,1\}^d$ is defined as $|S(X)\cap S(Y)|/|S(X)\cup S(Y)|$ where $S(X) = \{i : x_i = 1\}$ and $x_i$ is the $i$-th feature of $X$.  For a $d$-dimensional permutation $\pi$ chosen uniformly at random, and a data point $X \in\{0,1\}^d$, the $\minhash$~\citep{BroderCFM98} is defined  as follows:  $$\minhash_\pi(X) = \min_{s \in S(X)} \pi(s).$$ \blfootnote{We note that binary vectors and sets give two equivalent representations of the same data object. We elaborate on it as follows: Consider our data elements are a subset of a fixed universe. In the corresponding binary representation, we can generate a vector whose dimension is the size of the universe, where for each possible element of the universe, a feature position is designated. To represent a set into a binary vector, we label each element's location with $1$ if it is present in the set, and $0$ otherwise.} {\color{black} For a pair of points  $X, Y \in\{0,1\}^d$, $\minhash$~\citep{BroderCFM98} offers the following guarantee
 \begin{align*}
    \Pr[\minhash_{\pi}(X)=\minhash_{\pi}(Y)]&=\frac{|S(X)\cap S(Y)|}{|S(X)\cup S(Y)|}.
 \end{align*}}
  {\color{black}In fact, the same guarantee holds for a restricted class of permutations called min-wise independent permutations~\citep{10.5555/314500.314600,BroderCFM98,DBLP:journals/rsa/MatousekS03} . The above characteristic demonstrates the locality-sensitive nature (LSH)~\citep{IM98} of $\minhash$, and as a consequence, it can be effectively used for the approximate nearest neighbour search problem.}  $\minhash$ is successfully applied in several real life applications such as computing document similarity~\citep{broder1997resemblance}, itemset mining~\citep{CohenDFGIMUY01,DBLP:conf/cocoon/BeraP16}, faster de-duplication~\citep{BroderCPM00}, all-pair similarity search~\citep{BayardoMS07}, document clustering~\citep{broder2000method}, building recommendation engine~\citep{recom_engine}, near-duplicate image detection~\citep{DBLP:conf/bmvc/ChumPZ08}, web-crawling~\citep{10.1145/1242572.1242592,10.1145/1148170.1148222}, genomics~\citep{10_24072_pcjournal_37,OndovBrianD.2016Mfga,Brown2016,BerlinKonstantin2015CAlg},  large scale graph hashing~\citep{liu2014discrete,gibson2005discovering} to name a few.

 {\color{black} This work considers the scenario where features are dynamically inserted and/or deleted from the input. We emphasize that this natural setting may arise in many applications. Consider the \textit{``Bag-of-Word" (BoW)} representation of text, where first, a dictionary is created using the important words present in the corpus such that each word present in the dictionary corresponds to a feature in the representation. Consequently, the embedding of each document is generated using this dictionary based on the frequency of the words present.  Consider the downstream application where the task is to compute pairwise Jaccard similarities between these documents, and  the dimensionality of the \textit{BoW} representation is high due to the large dictionary size. We can  use  $\minhash$ to compute the low-dimensional sketch of input documents.  It is quite natural to assume that the dictionary is evolving; new words are inserted into the dictionary, and unused words are deleted.  One evident approach to handle such a dynamic scenario is to run the $\minhash$ from scratch on the updated dictionary, which is expensive since it involves generating  fresh min-wise independent (random) permutations.} Note that during the insertion/deletion of features in the dataset, we consider inserting/deleting the same features in all the data points. To clarify this further, let $\mathcal{D} = \{X_i\}_{i=1}^n$ be our dataset, where $X_i\in \{0, 1\}^d$.  Considering the addition/removal of the $j$-th feature, the $j$-th feature gets inserted/deleted in the point $X_i$. Similarly, the corresponding $j$-th feature is inserted/deleted in all the remaining points in $\mathcal{D}$. Note that we don’t consider the case when data points are dynamically inserted or deleted in the dataset. \\

\noindent\textbf{Problem statement: $\mathbf{\minhash}$ for dynamic insertion and deletion of features:
} In this work, we focus on making $\minhash$ adaptable to dynamic feature insertions and deletions of features. We note that the insertion/deletion of features dynamically leads to the expansion/shrink of the data dimension.

We note that in practice a $d$ dimensional permutation required for $\minhash$ is  generated via the universal hash function $h_d(i)=((ai+b)\mod p)\mod d$, where $p$ is a large prime number, and $a, b$ are randomly sampled from $\{0, 1, \ldots p-1\}$; typically $((ai+b)\mod p)> d$ \footnote{These hash functions are called universal hash functions. Readers may  refer to Chapter $11$ of \citep{DBLP:books/daglib/0023376} for details.}. This hash function generates permutations via mapping each index $i\in [d]$ to another index $[d]$ that can be used to compute the $\minhash$ sketch.
{\color{black} We note that in the case of dynamic insertions/deletion of features, even using universal hash functions to compute the $\minhash$ sketch doesn’t give an efficient solution.} 
We illustrate it as follows.  Suppose we have a  $\minhash$ sketch of data points using the hash function $h_d(.)$. Consider the  case of feature insertion,  where the dimension $d$ increases to $d+1$, and therefore, we require a hash function $h_{d+1}(.)$ to generate a $(d+1)$-dimensional permutation. Note that the permutation generated via $h_{d+1}(.)$ can potentially be different on several values of $i \in [d+1].$ Therefore,  just computing $h_{d+1}(d+1)$, taking the corresponding input feature, and taking the minimum of this quantity with the previous $\minhash$ would not suffice to compute $\minhash$ after feature insertion. This re-computation seems to take $O(d)$ in the worst case if implemented naively. A similar argument also holds in the case of feature deletion.


\subsection{Our Contribution:}
In this work, we consider the problem of making $\minhash$ adaptable to dynamic insertions and deletions of features. We focus on cases where features are inserted/deleted at randomly chosen positions from $1$ to $d$. We argue that this is a natural assumption that commonly occurs in practice. For example, in the context of \textit{BoW}, a word's position in the dictionary is determined via a random hash function that randomly maps it to a position from $1$ to $d$. Therefore, when a new word is added to the dictionary, its final position in the representation appears as a random position (from $1$ to $d$). A similar argument is also applicable for feature deletion. With this motivation and context, we summarize our contributions as follows:

\begin{itemize}
\item \textbf{Contribution 1:} We present algorithms  (Section~\ref{sec:feature_insertion}) that makes $\minhash$ sketch adaptable to single/multiple feature insertions. Our algorithm takes the current permutation and the corresponding $\minhash$ sketch; values and positions of the inserted features as input and outputs the $\minhash$ sketch corresponding to the updated dimension.


\item \textbf{Contribution 2:} We also suggest algorithms (Section~\ref{sec:feature_deletion}) that makes $\minhash$ sketch adaptable  for single/multiple feature deletions. It takes the data points, current sketch, and permutations used to generate the same; positions of the deleted features and outputs the $\minhash$ sketch corresponding to the updated dimension.
\end{itemize}
 
 Our work leaves the possibility of some interesting open questions: to propose algorithms when features are inserted or deleted adversarially (rather than uniformly at random from $1$ to $d$, as considered in this work). We hope that our techniques can be extended to handle this situation.


\subsection{Our techniques and their advantages: }


A major benefit of our results is that they do not require generating fresh random permutations corresponding to the updated dimension (after feature insertions/deletions) to compute the updated sketch. We implicitly generate a new permutation (required to compute the sketch after feature insertion/deletion) using the old $d$-dimensional permutation, and also show that it satisfies the min-wise independence property. We further give simple and efficient update rules that take the value and position of inserted/deleted features, and output the updated $\minhash$ sketch. To show the correctness of our result, we prove that the sketch obtained via our update rule is the same as obtained via computing $\minhash$ from scratch using the implicitly generated permutation as mentioned above.

For both insertions and deletion cases, our algorithms give significant speedups in dimensionality reduction time while offering almost comparable accuracy with respect to running $\minhash$ from scratch. We validate this by running extensive experiments on several real-world datasets (Section~\ref{sec:experiments} and Table \ref{tab:speedup}). We want to emphasize that our algorithms can also be easily implemented when permutations are generated via random hash functions.



\subsection{Applicability  of our result in other sketching algorithms for Jaccard similarity:}
 We note that there are several improved variants  of $\minhash$ are known  such as \text{one-permutation hashing}~\citep{one_permutation_hashing,DBLP:conf/uai/Shrivastava014}, \text{$b$-bit minwise hashing}~\citep{b-bit,DBLP:conf/internetware/0001SK13}, \text{oddsketch}~\citep{oddsketch} that offer space/time efficient sketches.  We would like to highlight that our algorithms can be easily adapt to these improved variants of $\minhash$, in case of dynamic insertion and deletion of features. We briefly discuss it as follows: One permutation hashing divides the permuted columns evenly into $k$ bins. For each data point, the sketch is computed by picking the smallest nonzero feature location in each bin. In the case of dynamic settings, our algorithms can be applied in the bin where features are getting inserted/deleted. Both $b-$bit minwise hashing~\citep{b-bit} and oddsketch~\citep{oddsketch} are two-step sketching algorithms. In their first step,  the $\minhash$  sketch of the data points is computed. In the second step of {$b$-bit minwise hashing}, the last  $b$-bits (in the binary representation) of each $\minhash$  signature is computed, whereas in the second step of oddsketch, one bit of each $\minhash$  sketch is computed using their proposed hashing algorithm. As both of these results compute the $\minhash$ sketch in their first step, we can apply our algorithms to compute the $\minhash$ sketch in case of feature insertion/deletion. This will make their algorithms adaptable to dynamic feature insertions and deletions.

 Recently,  some hashing algorithms have been proposed that closely estimate the pairwise Jaccard similarity~\citep{ChristianiP17,McCauleyMP18,DBLP:conf/icde/ChristianiPS18} without computing their $\minhash$ sketch. However, to the best of our knowledge, their dynamic versions (that can handle dynamic insertions/deletions of features) are unknown. Several improvements of the LSH algorithm~\citep{SundaramTSMIMD13} have been proposed that are adaptable to the dynamic/streaming framework. However, a significant difference is in the underlying problem statement. These results aim to handle dynamic insertion and deletions of data points, whereas we focus on dynamic insertions and deletions of the features.

\paragraph{Organization of the paper:}
In Section~\ref{sec:background} we present the required technical background. In Section~\ref{sec:feature_insertion} and Section~\ref{sec:feature_deletion} we present our algorithms for feature insertion and feature deletion respectively. We summarize the results of our experiments in Section~\ref{sec:experiments}. Finally we note the concluding remarks in Section~\ref{sec:conclusion}.

 \section{Background} \label{sec:background}
 \begin{definition}[Minwise Independent Permutations~\citep{BroderCFM98}] \label{def:minwise}
Let $S_d$ be the set of all permutation on $[d]$. We say that $F \subseteq S_d$ (the symmetric group) is min-wise independent if for any set $U \subseteq [d]$ and any $u \in U$, when $\pi$ is chosen at random in $F$, we have
\begin{align*}
    \Pr[\min\{\pi(U)\} &= \pi(u)]=1/|U|.                  \numberthis \label{eq:minwise}
\end{align*}
For a permutation $\pi \in F$ chosen at random and a set $U \subseteq [d]$, Broder \textit{et.al.}~\citep{BroderCFM98} define $\minhash$ as follows  $\minhash_\pi(U) = \arg\min_u \pi(u)$ for $u \in U$.
 For two data points, $U,V \subseteq [d]$, and $\pi$ is chosen at random in $F$, due to $\minhash$ we have
 \begin{align*}
    \Pr[\minhash_{\pi}(U)=\minhash_{\pi}(V)]&=|U \cap V|/|U \cup V| \numberthis \label{eq:minwise_jaccard}.
 \end{align*}
  \end{definition}

 \section{Algorithm for feature insertion}\label{sec:feature_insertion}
 \begin{table*}
\caption{\footnotesize{Notations}\label{tab:notations}} 
    \centering
    \scalebox{0.83}{
      \noindent\begin{tabular}{lc||lc}
    \hline
    Data dimension & $d$ & Input data point $\{0,1\}^d$ or input set & $X$ \\
    \hline
    Set $\{1,\ldots, d\}$ & $[d]$ &Data point after feature insertion $\{0,1\}^{d+1}$  & $X'$ \\
    \hline
   Position of the inserted feature & $m$     & Original $d$-dim. permutation $(a_1,\ldots,a_d)$ \textit{s.t.} $a_i \in [d]$  & $\pi$\\
    \hline
    Value  of the inserted feature & $b$     & Lifted $(d+1)$-dim. permutation $(a’_1,\ldots a’_{d+1})$ \textit{s.t.} $a’_i \in[d+1]$  & $\pi'_m$\\
    \hline
     No. of $1'$s in $X$ & $|X|$     &Set of non-zero indices of $X$, \textit{i.e.}, 
 $\{i | x_i = 1\}$&$J$\\
    \hline
     Size of the set $J$  & $|J|$     & $\minhash$ of $X$ with $\pi$, \textit{i.e.}, $\minhash_{\pi}(X)$&$h_{old}$\\
     \hline
\end{tabular}
}
\end{table*}
 We first give our algorithm for a single feature insertion. We  discuss it in the following subsection. 
\subsection{One feature insertion at a time -- $\mathrm{liftHash}$:}\label{subsec:liftHash}
The $\mathrm{liftHash}$ (Algorithm~\ref{alg:update_sketch}) is our main algorithm for updating the sketch of data points consisting of binary features. It takes a $d$ dimensional permutation $\pi$ and the corresponding $\minhash$ sketch $h_{old}$ $\pi$  as input. In addition, it takes an index $m$ and a bit value $b$, corresponding to the position and the value of the binary feature to be inserted, respectively, and  outputs updated  hash value $h_{new}$. We show that $h_{new}$ corresponds to a $\minhash$ sketch of the updated feature vector. In order to show this, we use $\mathrm{liftPerm}$ (Algorithm~\ref{alg:lift}) which extends the original permutation $\pi$ to  a $(d+1)$ dimensional min-wise independent permutation. Note that the $\mathrm{liftPerm}$ algorithm is used solely for the proof and not required in the $\mathrm{liftHash}$ algorithm.

The main intuition of our algorithm is that we can (implicitly) generate a new $(d+1)$-dimensional permutation by  reusing the old $d$-dimensional permutation (Algorithm~\ref{alg:lift}), and can update the corresponding $\minhash$ \textit{w.r.t.} the new $(d+1)$-dimensional  permutation via a simple update rule (Algorithm~\ref{alg:update_sketch}).
Consider a $d$ dimensional input vector $X = (x_1, x_2, \ldots, x_d).$ A permutation $\pi$ of $\{1, 2, \ldots, d\}$ can be thought of as imposing the following ordering on the indices of $X$: $\pi(1), \pi(2), \ldots, \pi(d)$. After feature insertion, we want the (implicit) $\mathrm{liftPerm}$ algorithm to generate a new permutation $\pi'$ of $\{1, 2, \ldots, d+1\}$ that still maintains the ordering that was imposed by $\pi$. We show that such an extension is achievable with high probability assuming (i) feature insertion is happening at a random position and (ii) our binary feature vector is sparse.
This helps us guarantee (with high probability) that $\pi'$ is min-wise independent if $\pi$ is min-wise independent (see Theorem~\ref{thm:lift}). Finally, we show that the sketch obtained by the $\mathrm{liftHash}$ algorithm is the same one produced by applying the $\minhash$  with respect to the output $\pi'$ of the $\mathrm{liftPerm}$ algorithm (see Theorem~\ref{thm:update_sketch}).


 \begin{algorithm} 
 \textbf{Input:~}{ $d$-dim permutation $\pi$, a number $r$.}\\
 \textbf{Output:~}{$(d+1)$-dim. permutation ${\pi}'$.}\\
   \For{$i\in \{1,\ldots,d+1\}$}
   {
   \eIf{$i\leq r$} 
   {$\pi’(i) = \pi(i)$}
   {$\pi’(i) = \pi(i-1)$ \label{alg:line7liftPerm}} 
   }
   \For{ $i \in \{1,\ldots, d+1\} / \{r\}$}
   {
  \If{${\pi}’(i) \geq  {\pi’}(r)$}
   {  \label{alg:line11liftPerm}
   ${\pi}’(i) = {\pi}’(i) + 1$ \label{alg:line12liftPerm}
   }
   }
   \Return ${\pi}’$\\
\caption{$\mathrm{liftPerm(\pi, r)}$.}\label{alg:lift}
  \end{algorithm}
  \begin{algorithm} 
  \SetNoFillComment
    \textbf{Input:~}{ $h_{old}:=\mathrm{minHash}_{\pi}(X)$,   $\pi$,  $m \in [d]$,   $b \in \{0, 1\}$.}\\
 \textbf{Output:~}{$h_{new}:=\mathrm{liftHash}(\pi, m,b,h_{old})$.}\\%
{ Denote $ a_m=\pi(m)$.} \\
\tcc{$m$ is the position of the inserted feature}
 \eIf{$h_{old} < a_m$}
 {$ h_{new} = h_{old}$}{
 \If{$b=1$}{$h_{new}  = a_m$ \label{alg:line8liftDrop}}
 \If{$b=0$}{$h_{new}  = h_{old} + 1$ \label{alg:line11liftDrop}}
}
\Return $h_{new}$
\caption{$\mathrm{liftHash}(\pi, m,b,h_{old})$.}\label{alg:update_sketch}
\end{algorithm}

We illustrate our algorithm with the following example and then state its proof of correctness.

 \begin{example}
  We illustrate our Algorithms using the following example. We assume that the index count starts with $1$. 
Let \(X =[1,0,0,1,0,1,0]\) be the data point, and \(\pi = 
[6,3,1,7,2,5,4]\) be the original permutation. Then  $\minhash_{\pi}(X)$ is $5$.  Further, let us assume that we insert the value \(b=1\) at the index \(m=2\). Therefore $a_m=\pi(m)=3$. The updated value  \(X'=[1,1,0,0,1,0,1,0]\) and due to Algorithm~\ref{alg:lift} by setting $r=m=2$, we obtain  \(\pi'_m=[7,3,4,1,8,2,6,5]\). We calculate the value of $h_{new}$ outputted by Algorithm~\ref{alg:update_sketch}: as  
  \( h_{old}=5 > a_m = 3 \) and $b=1$, then  we have $h_{new}=\mathrm{liftHash}(\pi, m,b,h_{old})  = a_m=3$. Further, $\minhash_{\pi'_m}(X')=3$. Therefore, we have $h_{new}=\minhash_{\pi'_m}(X')$.  
\end{example}
 
The following theorem gives proof of correctness of Algorithm~\ref{alg:lift}, and shows that the permutation $\pi’$ outputted by the algorithms satisfies the minwise independent property (Definition~\ref{def:minwise}), with high probability. At a high-level proof of Theorem~\ref{thm:lift} relies on showing the bijection between the ordering on the indices of  $X$ by the original $d$-dimensional permutation $\pi$,  and $(d+1)$-dimensional permutation $\pi’$. We show that this bijection holds with  probability $1$ when inserted feature value $b=0$, and holds with a high probability when $b=1$.

 \begin{restatable}{theorem}{insertliftPerm}\label{thm:lift}
 Let  $\pi= (a_1 \ldots, a_d)$ be a minwise independent permutation, where $a_i \in [d]$, and $r$ be a random number from $[d]$. Let $\pi$ and $r$ be the input to Algorithm~\ref{alg:lift}. 
 Then  for any $X\in \{0,1\}^d$ with $|X|\leq k$, the permutation $\pi’ = (a’_1 \ldots, a’_{d+1})$, where $a’_i \in [d+1],$ obtained from Algorithm~\ref{alg:lift} satisfies the condition stated in Equation~\eqref{eq:minwise}   of Definition~\ref{def:minwise}, with  probability at least $1-O(k/d)$. 
 \end{restatable}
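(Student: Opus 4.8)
The plan is to show that the lifted permutation $\pi'$ produced by Algorithm~\ref{alg:lift} behaves, on the indices where $X$ is supported, exactly like the original min-wise independent $\pi$, so that the min-wise property (Equation~\eqref{eq:minwise}) transfers directly. The key observation is that Algorithm~\ref{alg:lift} does two things: it shifts the \emph{index} labels past position $r$ (the first loop) and it renumbers the \emph{rank} values that are at least $\pi'(r)$ upward by one (the second loop) to make room for the new rank assigned at position $r$. I would first argue that, restricted to the set $J = S(X)$ of non-zero indices, the \emph{relative order} of ranks under $\pi'$ is identical to their relative order under $\pi$. The index-shift in the first loop is a bijection from $[d]$ to $[d+1]\setminus\{r\}$ that preserves which original feature sits at each (relabelled) position, and the rank-renumbering in the second loop is a monotone increasing map, hence order-preserving on the ranks. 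So if $\pi$ imposes ordering $\pi(i_1) < \pi(i_2) < \cdots$ on the support, then $\pi'$ imposes the same ordering on the corresponding relabelled indices.

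**The second step** is to turn this order-isomorphism into the min-wise guarantee. For any $U \subseteq [d]$ and $u \in U$, I want $\Pr[\min\{\pi'(U')\} = \pi'(u')] = 1/|U'|$, where $U'$ and $u'$ are the images of $U$ and $u$ under the index relabelling. If the newly inserted position $r$ does not interfere — i.e. if $r \notin U$ or the inserted feature value does not force a new minimum into the support — then the order-isomorphism established above shows $\min\{\pi'(U')\} = \pi'(u')$ precisely when $\min\{\pi(U)\} = \pi(u)$, so the probability is inherited unchanged from the min-wise independence of $\pi$ over the random choice of $\pi \in F$ (and $r$). The clean case $b=0$ should give probability exactly $1$ for the bijection, since inserting a zero feature never adds a new element to any support set, so the support of every $X$ is untouched as a subset and the order-isomorphism is exact.

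**The main obstacle** is the case $b=1$, where the inserted feature at position $r$ \emph{does} join the support, enlarging $J$ by one element. Here the bijection between orderings can fail: the new index $r$ receives rank $\pi'(r)$, and if this rank happens to be the minimum over the enlarged support, it hijacks the argmin and breaks the correspondence with $\pi$'s behaviour on the original support. I would bound the failure probability by counting: the problem arises only when $r$ lands at a position whose rank is smaller than the current minimum rank over the $\le k$ support indices. Since $r$ is chosen uniformly at random from $[d]$ and the support has size at most $k$, a union/counting argument over the at most $k$ ``dangerous'' configurations gives a failure probability of $O(k/d)$. The sparsity hypothesis $|X| \le k$ is exactly what makes this bound nontrivial.

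**Putting it together**, I would assemble the proof as: (1) prove the order-isomorphism on the support via the two loops (a deterministic structural claim); (2) observe that $b=0$ yields an exact isomorphism, so Equation~\eqref{eq:minwise} holds with probability $1$; (3) for $b=1$, identify the bad event that the inserted rank becomes a spurious minimum, bound its probability by $O(k/d)$ using the uniform choice of $r$ and $|X|\le k$; and (4) conclude that off the bad event the min-wise identity is inherited from $\pi$, so overall Equation~\eqref{eq:minwise} holds with probability at least $1 - O(k/d)$. I expect the bookkeeping in step~(1) — tracking how the double relabelling of indices and ranks interacts — to be the most delicate part to write carefully, even though conceptually it is just ``insert one slot and preserve order.''
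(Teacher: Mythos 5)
Your step (1), the order-isomorphism on the support obtained from the index shift plus the monotone rank renumbering, and your step (2), the exact transfer in the $b=0$ case, are sound and correspond to Propositions~\ref{prop:indicator}, \ref{prop:prop7} and \ref{prop:prop_b_zero} of the paper. The genuine gap is in your step (3), the $b=1$ case: you declare the event ``the inserted rank $\pi'(r)$ is the minimum over the enlarged support'' to be the bad event and claim its probability is $O(k/d)$. Both halves of this are wrong. Quantitatively, off the collision event $r\in J$ the set $U=J\cup\{r\}$ has $k+1$ elements, and applying Equation~\eqref{eq:minwise} to $U$ with $u=r$ gives $\Pr[\min\{\pi(U)\}=\pi(r)]=1/(k+1)$, which is $\Omega(1/k)$ and far larger than $O(k/d)$ for sparse inputs; no counting over at most $k$ ``dangerous configurations'' can make this event rare. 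Conceptually, this event is not a failure mode at all: for $\pi'$ to satisfy Equation~\eqref{eq:minwise} on the new support $J'$ of size $k+1$, the inserted index must capture the argmin with probability exactly $1/(k+1)$. If you condition this event away, each of the $k$ old support elements is the argmin with probability $1/k$ on the event you keep, which is precisely the wrong distribution --- so your step (4) would be proving a false statement rather than the theorem.

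The idea you are missing is to invoke the min-wise independence of $\pi$ not on $J$ but on the enlarged set $J\cup\{r\}$, which is still a subset of $[d]$: the argmin of $\pi$ over $J\cup\{r\}$ is uniform over its $k+1$ elements, and your order-isomorphism, extended by $r\mapsto r$ with $\pi'(r)=\pi(r)$, transfers this uniformity verbatim to the argmin of $\pi'$ over $J'$. This is exactly the paper's route (Propositions~\ref{prop:prop7} and \ref{prop:prop_b_one}). The true $O(k/d)$ bad event is a different one, which your proposal never isolates: the event that $r$ or an adjacent index lies in $J$ (Proposition~\ref{prop:high_prob}). When that happens, the inserted slot collides with an occupied position: the new index $r$ carries value $\pi(r)$ while the displaced old index $r$, now sitting at position $r+1$, carries $\pi(r)+1$, so $J'$ is no longer in value-preserving bijection with a set of $k+1$ distinct old indices and the transfer argument breaks. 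That event is a union over at most $O(k)$ support positions, each hit by the uniform $r$ with probability $1/d$, and it alone accounts for the $1-O(k/d)$ in the statement.
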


 Theorem~\ref{thm:update_sketch} gives a proof of correctness of Algorithm~\ref{alg:update_sketch}. We show that the sketch outputted by Algorithm~\ref{alg:update_sketch} is the same as obtained by running $\minhash$ using the $(d+1)$-dimensional permutation obtained by Algorithm~\ref{alg:lift} on the updated data point after one feature insertion.

 \begin{restatable}{theorem}{insertliftHash}\label{thm:update_sketch}
 Let $\pi’_{m}$ be the $(d+1)$-dimensional permutation outputted  by Algorithm~\ref{alg:lift} by setting $r=m$. Then, the  sketch obtained from Algorithm~\ref{alg:update_sketch} is exactly the same to the  sketch  obtained with  the permutation ${\pi}’_m$ on $X'$, that is, $h_{new}:=\mathrm{liftHash}(\pi, m,b,h_{old}) = \mathrm{minHash}_{\pi'_m}(X')$ .
 \end{restatable}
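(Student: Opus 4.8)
The plan is to compute $\minhash_{\pi'_m}(X')=\min_{s\in S(X')}\pi'_m(s)$ explicitly and show that it collapses to the three-way rule of Algorithm~\ref{alg:update_sketch}. First I would describe the support of the updated vector. Since the new feature is inserted at position $m$ (pushing the original coordinates $m,\dots,d$ of $X$ to positions $m+1,\dots,d+1$ of $X'$), we have $s\in S(X')$ for $s<m$ iff $s\in S(X)$; $m\in S(X')$ iff $b=1$; and for $s>m$, $s\in S(X')$ iff $s-1\in S(X)$. Thus $S(X')$ is the image of $S(X)$ under the position shift (identity below $m$, $+1$ at and above $m$), together with the singleton $\{m\}$ exactly when $b=1$.

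Next I would read off the values of $\pi'_m$ from Algorithm~\ref{alg:lift} run with $r=m$. Writing $a_m=\pi(m)$ and $f(v):=v+\mathbf{1}\{v\ge a_m\}$, the two loops give $\pi'_m(m)=a_m$, $\pi'_m(s)=f(\pi(s))$ for $s<m$, and $\pi'_m(s)=f(\pi(s-1))$ for $s>m$. Combining this with the support description, every $t\in S(X)$ contributes the value $f(\pi(t))$ to the minimum --- through position $t$ when $t<m$, and through position $t+1$ when $t\ge m$ --- while the inserted coordinate contributes the extra value $a_m$ precisely when $b=1$. Hence
\[
\minhash_{\pi'_m}(X')=\min\Big(\min_{t\in S(X)}f(\pi(t)),\; c_b\Big),
\qquad
c_b=\begin{cases}a_m,&b=1,\\ +\infty,&b=0.\end{cases}
\]

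The crux, and the step I expect to carry the argument, is that $f$ is \emph{strictly increasing}: it fixes every value below $a_m$ and shifts every value $\ge a_m$ up by one, so it preserves the order on $[d]$ (and in fact maps $[d]$ bijectively onto $[d+1]\setminus\{a_m\}$, which also confirms $\pi'_m$ is a genuine permutation). Consequently $\min$ commutes with $f$, giving $\min_{t\in S(X)}f(\pi(t))=f\big(\min_{t\in S(X)}\pi(t)\big)=f(h_{old})=h_{old}+\mathbf{1}\{h_{old}\ge a_m\}$. This is exactly what lets the update depend on the single scalar $h_{old}$ rather than on all of $S(X)$.

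It then remains to match $\min\big(f(h_{old}),c_b\big)$ against the branches of Algorithm~\ref{alg:update_sketch}. If $h_{old}<a_m$ then $f(h_{old})=h_{old}<a_m\le c_b$, so the minimum is $h_{old}$, agreeing with the first branch independently of $b$. If $h_{old}\ge a_m$ then $f(h_{old})=h_{old}+1>a_m$: for $b=1$ the minimum is $c_b=a_m$, and for $b=0$ it is $h_{old}+1$, matching the two remaining branches. The only bookkeeping subtlety is the boundary value $h_{old}=a_m$, which can occur only when $x_m=1$ (so $m\in S(X)$) and $\pi(m)$ attains the minimum; it is correctly absorbed into the $h_{old}\ge a_m$ case. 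Since $h_{new}$ equals $\min\big(f(h_{old}),c_b\big)=\minhash_{\pi'_m}(X')$ in every case, the proof is complete.
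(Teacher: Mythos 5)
Your proof is correct, and it shares the paper's overall skeleton: like the paper, you first derive a closed-form expression for $\pi'_m$ (your map $f(v)=v+\mathbbm{1}_{\{v\ge a_m\}}$ is exactly Proposition~\ref{prop:indicator}), and you finish with the same three-way case split mirroring the branches of Algorithm~\ref{alg:update_sketch} (the paper's Propositions~\ref{prop:prop1}, \ref{prop:prop2}, \ref{prop:prop3}). The genuine difference is the key lemma used to relate the old minimum to the new one. The paper handles two of the three cases by appealing to Propositions~\ref{prop:prop_b_zero} and~\ref{prop:prop_b_one}, which say that the bijection of Proposition~\ref{prop:prop7} carries the $\arg\min$ of $\pi$ over $J$ to the $\arg\min$ of $\pi'_m$ over $J'$; those propositions are stated under the hypothesis $r-1,r,r+1\notin J$, a hypothesis that is really only needed for the min-wise-independence result (Theorem~\ref{thm:lift}) and holds only with probability $1-O(k/d)$. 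You instead observe that $f$ is strictly increasing and therefore commutes with $\min$, which gives $\minhash_{\pi'_m}(X')=\min\bigl(f(h_{old}),c_b\bigr)$ directly and unconditionally --- no sparsity of $X$, no randomness of $m$, no restriction on whether $m$ or its neighbours lie in the support. This makes your argument tighter and, as written, slightly stronger: it establishes $h_{new}=\minhash_{\pi'_m}(X')$ deterministically for every $X$, $m$, $b$, which is precisely what Theorem~\ref{thm:update_sketch} asserts (the statement carries no probabilistic qualifier), whereas the paper's write-up, read literally, inherits side conditions from the propositions it cites. The mathematical core is the same --- the paper's indicator case analysis inside Propositions~\ref{prop:prop_b_zero} and~\ref{prop:prop_b_one} is in effect a proof that $f$ preserves order at the minimum --- but your packaging of it as a single monotonicity lemma removes the hidden hypotheses and also yields, as a byproduct, the check that $\pi'_m$ is a genuine permutation of $[d+1]$.
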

 \begin{remark}
 We remark that in order to compute the $\minhash$ sketch of  $X’$, Algorithm~\ref{alg:update_sketch} requires only $h_{old}$, $b, m$, the value of $\pi(m)$.  Whereas \textit{vanilla} $\minhash$ requires a fresh $(d+1)$ dimensional permutation to compute  the same. 
\end{remark}
 
We first give a proof of  Theorem~\ref{thm:lift} using Propositions~\ref{prop:indicator}, \ref{prop:high_prob}, \ref{prop:prop7}, \ref{prop:prop_b_zero} and \ref{prop:prop_b_one}.  

\begin{restatable}{prop}{liftSingleInsertFirstProp}\label{prop:indicator}
In Algorithm~\ref{alg:lift}, we have the following: if $i < r$, then $\pi'(i) = \pi(i) + \mathbbm{1}_{\{\pi(i) \geq \pi(r)\}}$;
if $i = r$, then $\pi'(r) = \pi(r)$; if $i > r$, then $\pi'(i) = \pi(i-1) + \mathbbm{1}_{\{\pi(i-1) \geq \pi(r)\}}$.
\end{restatable}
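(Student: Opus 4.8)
The plan is to prove the proposition by directly tracing the two loops of Algorithm~\ref{alg:lift} and recording the value stored at each index $i$ of $\pi'$ when the algorithm terminates. Since the statement concerns only what the algorithm computes, no probabilistic or combinatorial ingredient is required; the entire argument is careful bookkeeping of the in-place updates.

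First I would analyze the state of $\pi'$ immediately after the first loop. By construction, for every $i \le r$ we have $\pi'(i) = \pi(i)$, and for every $i > r$ we have $\pi'(i) = \pi(i-1)$. In particular, taking $i = r$ gives $\pi'(r) = \pi(r)$, which already settles the middle case of the claim: the second loop ranges over $\{1,\dots,d+1\}\setminus\{r\}$, so it never writes to position $r$, and thus $\pi'(r)$ retains the value $\pi(r)$ through the end of the algorithm.

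The key observation for the two remaining cases is that the threshold used in the comparison of the second loop is $\pi'(r)$, which by the previous step equals $\pi(r)$ and stays fixed at $\pi(r)$ throughout that loop. Moreover, each index $i \neq r$ is visited exactly once, and the update touches only $\pi'(i)$; hence the value of $\pi'(i)$ read in the test $\pi'(i) \ge \pi'(r)$ is precisely its post-first-loop value. It follows that the second loop adds exactly $\mathbbm{1}_{\{\pi'(i) \ge \pi(r)\}}$ to each $\pi'(i)$, where on the right $\pi'(i)$ denotes the value fixed in the first step. Substituting $\pi'(i) = \pi(i)$ for $i < r$ and $\pi'(i) = \pi(i-1)$ for $i > r$ yields the two claimed formulas, namely $\pi'(i) = \pi(i) + \mathbbm{1}_{\{\pi(i) \ge \pi(r)\}}$ and $\pi'(i) = \pi(i-1) + \mathbbm{1}_{\{\pi(i-1) \ge \pi(r)\}}$.

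The only point demanding care---and the closest thing to an obstacle---is confirming that the threshold $\pi'(r)$ is read as $\pi(r)$ rather than as some already-incremented quantity, and that no index is incremented twice. Both follow from the facts that the second loop never modifies $\pi'(r)$ and visits each index $i\neq r$ exactly once, so there is no aliasing between the read of $\pi'(r)$ and the writes to the $\pi'(i)$. I would state these two facts explicitly to justify that the order in which the indices are processed is irrelevant. Once this is pinned down, all three cases follow immediately by substitution.
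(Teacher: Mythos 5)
Your proof is correct and follows essentially the same route as the paper's: trace the first loop to get the initialization $\pi'(i)=\pi(i)$ for $i\le r$ and $\pi'(i)=\pi(i-1)$ for $i>r$, then observe that the second loop skips $r$ and adds $1$ to $\pi'(i)$ exactly when the indicator condition holds. Your version is in fact slightly more careful than the paper's, since you explicitly justify that the threshold $\pi'(r)$ remains equal to $\pi(r)$ throughout the second loop and that each index is incremented at most once, points the paper leaves implicit.
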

 
\begin{proof}
Note that Algorithm~\ref{alg:lift} initially sets all $\pi'(i) = \pi(i)$ if $i  \leq r$, and $\pi'(i) = \pi(i-1)$ if $i > r$.
If $i < r$, then only if the condition in line~\ref{alg:line11liftPerm}  of Algorithm ~\ref{alg:lift} is satisfied we increment $\pi'(i)$ by $1$, which happens when $\mathbbm{1}_{\{\pi'(i) \geq \pi(r)\}}=1$. 
 If $i = r$, then $\pi'(r)$ is never updated by Algorithm~\ref{alg:lift}.
If $i > r$, then $\pi'(i)$ is initialized in line~\ref{alg:line7liftPerm}  of the algorithm to $\pi(i-1)$. Then  it gets updated in line~\ref{alg:line12liftPerm}  only when the condition in line \ref{alg:line11liftPerm}  is satisfied which happens when 
$\mathbbm{1}_{\{\pi(i) \geq \pi(r)\}}=1.$
\end{proof}

\begin{restatable}{prop}{liftSingleInsertSecondProp}\label{prop:high_prob}
If $X\in \{0, 1\}^d$ with $|X|\leq k$  and $r$ is chosen  uniformly at random from $\{1, 2, , \ldots, d\}$, then with probability at least $1 - O(k/d)$ we have:
$x_r = 0$ and $x_{r+1} = 0.$
\end{restatable}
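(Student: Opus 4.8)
The plan is to control the probability of the complementary event by a union bound together with an elementary counting argument. Writing the target event as $\{x_r = 0 \text{ and } x_{r+1} = 0\}$, its complement is $\{x_r = 1\} \cup \{x_{r+1} = 1\}$, so it suffices to prove $\Pr[x_r = 1 \text{ or } x_{r+1} = 1] = O(k/d)$ and then pass to complements.

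To bound the two terms, I would argue as follows. Since $r$ is uniform on $\{1, \ldots, d\}$ and at most $k$ coordinates of $X$ equal $1$, the number of choices of $r$ with $x_r = 1$ is exactly $|X| \leq k$, so $\Pr[x_r = 1] = |X|/d \leq k/d$. For the shifted term, the number of $r \in [d]$ with $x_{r+1} = 1$ equals the number of ones among $x_2, \ldots, x_{d+1}$, which is again at most $|X| \leq k$ (adopting the convention $x_{d+1} = 0$, consistent with the fact that no feature sits beyond index $d$ prior to insertion). Hence $\Pr[x_{r+1} = 1] \leq k/d$ as well. A union bound then gives $\Pr[x_r = 1 \text{ or } x_{r+1} = 1] \leq 2k/d$, and taking complements yields the claimed lower bound $1 - O(k/d)$.

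The argument is essentially immediate, so the only point requiring care is the boundary case $r = d$, where $x_{r+1} = x_{d+1}$ is undefined for a $d$-dimensional vector; I would resolve this by setting $x_{d+1} = 0$, which matches the insertion setup in which the new coordinate has not yet been placed. I would also emphasize that the union bound is the correct tool precisely because the events $\{x_r = 1\}$ and $\{x_{r+1} = 1\}$ are correlated through the adjacency of $x_r$ and $x_{r+1}$, so establishing independence is neither true in general nor necessary for the conclusion.
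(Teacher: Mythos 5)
Your proof is correct and takes essentially the same approach as the paper's: a union bound over the uniformly random choice of $r$, using the fact that at most $k$ of the $d$ positions are nonzero so that each bad event has probability at most $k/d$. The only differences are cosmetic --- the paper's proof also folds in the event $r-1 \in J$ (needed for a later proposition, not for this statement), while you treat the boundary convention $x_{d+1}=0$ more explicitly.
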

\begin{proof}
Let $J = \{j : x_j = 1\}$ be the set of non-zero indices of $X$.
Since $r$ is chosen uniformly at random from $\{1,\ldots, d+1\}$, probability that $r-1, r, r+1 \in J$ is at most $O(k/d)$. Thus $x_r = 0$ and $x_{r+1} = 0$ with probability at least $1 - O(k/d)$.
\end{proof}

\begin{restatable}{prop}{liftSingleInsertThirdProp}\label{prop:prop7}
Let $J$ be the set of non-zero indices of $X$, and $J'$ be the set of non-zero indices of $X'$.
Assume that $r-1, r, r+1 \notin J$.
If $b=0$, then there is a bijection from $J$ to $J'$ defined as following mapping from $j \in J$ to $j' \in J'$: if $j<r$, then $j'=j$ and $j> r$, then $j'=j+1$. If $b=1$ then the same bijection holds from $J \cup \{r\} $ to $J'$ by additionally mapping $r$ to $r.$
\end{restatable}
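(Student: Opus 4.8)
The plan is to prove the bijection directly, by first recording the explicit coordinate-wise relationship between $X$ and $X'$ that insertion induces, and then reading off $J'$ as a disjoint union of three pieces. Inserting the value $b$ at position $r$ shifts every coordinate at position $\geq r$ one step to the right, so $X'_i = X_i$ for $i < r$, $X'_r = b$, and $X'_i = X_{i-1}$ for $i > r$. This coordinate relation is the only structural fact needed; everything after it is bookkeeping on index sets.

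From this relation I would write $J' = \{i : X'_i = 1\}$ as a union of three parts. The coordinates $i < r$ contribute exactly $\{j \in J : j < r\}$; the coordinate $i = r$ contributes $\{r\}$ precisely when $b=1$; and the coordinates $i > r$ contribute $\{j+1 : j \in J,\ j \geq r\}$, which equals $\{j+1 : j \in J,\ j > r\}$ once we invoke $r \notin J$ (guaranteed by the hypothesis $r-1,r,r+1 \notin J$). These three parts lie in the pairwise-disjoint ranges $\{1,\dots,r-1\}$, $\{r\}$, and $\{r+2,\dots,d+1\}$, so the union is genuinely disjoint, and in particular $J \cup \{r\}$ is a disjoint union when $b=1$.

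With $J'$ decomposed this way, the claimed map $\phi$ (sending $j \mapsto j$ for $j<r$, $r \mapsto r$ in the $b=1$ case, and $j \mapsto j+1$ for $j > r$) restricts to a bijection of each domain piece onto the corresponding piece of $J'$. I would check well-definedness (each image lies in $J'$, which is immediate from the coordinate relation: e.g. $j > r$ with $j \in J$ gives $X'_{j+1}=X_j=1$), injectivity (the three image ranges are disjoint and $\phi$ is order-preserving within each piece), and surjectivity (every $i \in J'$ falls into one of the three ranges; the only delicate case is $i > r$, where $X'_i = X_{i-1}=1$ forces $i-1 \in J$, and $i-1 \neq r$ because $r \notin J$, so $i-1 > r$ and $\phi(i-1)=i$). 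The $b=0$ and $b=1$ cases are then handled uniformly, the latter simply adding the middle piece $\{r\}$.

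The one place the hypothesis genuinely bites — and the step I expect to need the most care — is the boundary at position $r$. Ruling out $r \in J$ is what prevents the "$j > r$" branch from colliding with the inserted coordinate and what makes the preimage in the surjectivity argument unique; without it the shifted part of $J'$ would not align cleanly with the $j \mapsto j+1$ rule. Everything else reduces to the disjoint-union accounting above, so once $r \notin J$ is in hand the bijection follows with no further estimates.
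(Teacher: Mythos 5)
Your proof is correct and follows essentially the same route as the paper's: both rest on the fact that insertion at position $r$ leaves indices $<r$ fixed and shifts indices $>r$ by one, with $r \notin J$ preventing any collision at the inserted coordinate. The only difference is one of explicitness — the paper simply reads the bijection off Proposition~\ref{prop:indicator} (the shift structure of the lifted permutation) plus a cardinality remark, whereas you ground it directly in the coordinate relation $X'_i = X_i$ ($i<r$), $X'_r = b$, $X'_i = X_{i-1}$ ($i>r$) and verify well-definedness, injectivity, and surjectivity in full.
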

\begin{proof}
If $b = 0$, then from Proposition~\ref{prop:indicator}, we get a bijection from $J$ to $J'$ defined as follows:
if $j < r$, then $j' = j$; if $j > r$ then $j' = j + 1$.
If $b = 1$, then again from Proposition~\ref{prop:indicator} and the fact that $r-1, r, r+1 \notin J$, we get a bijection
from $J \cup \{r\}$ to $J'$ defined as follows:
if $j < r$ then $j' = j;$ if $j = r$ then $j'= j$; if $j > r$ then $j' = j + 1$.

Note that if $r \in J$ then $J \cup \{r\} = J$. Since $|J| = k$ and  $|J'| = k+1$, a bijection between $J \cup \{r\}$ and $J$ would not be possible.
\end{proof}

\begin{restatable}{prop}{liftSingleInsertFourthProp}\label{prop:prop_b_zero}
Let $j_{min} \in J$ be the index such that $\pi(j_{min})$ is minimum among $\{\pi(j) : j \in J\}$. 
If $b = 0$,   then 
$\pi'(j'_{min})$ is  minimum among all $\{\pi'(j') : j' \in J'\}$, where
$j'_{min}$ is obtained from $j_{min}$ via the bijection from Proposition~\ref{prop:prop7}.
\end{restatable}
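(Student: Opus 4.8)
The plan is to reduce the entire claim to a single monotonicity fact about the map that $\mathrm{liftPerm}$ induces on hash values. First I would combine Proposition~\ref{prop:indicator} with the bijection of Proposition~\ref{prop:prop7} to obtain one uniform formula relating $\pi'(j')$ to $\pi(j)$ for every $j \in J$. For $j < r$ the bijection gives $j' = j$, and since $j' < r$, Proposition~\ref{prop:indicator} yields
\[
\pi'(j') = \pi(j) + \mathbbm{1}_{\{\pi(j) \geq \pi(r)\}}.
\]
For $j > r$ the bijection gives $j' = j+1 > r$, and applying the $i > r$ case of Proposition~\ref{prop:indicator} with $i = j+1$ gives $\pi'(j') = \pi'(j+1) = \pi(j) + \mathbbm{1}_{\{\pi(j) \geq \pi(r)\}}$, where the index shift $i-1 = j$ cancels exactly against the offset introduced by the bijection. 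Thus in both cases $\pi'(j') = \phi(\pi(j))$ for the common function $\phi(v) := v + \mathbbm{1}_{\{v \geq \pi(r)\}}$.

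Next I would show that $\phi$ is strictly increasing on the integers. Given distinct integers $v_1 < v_2$, the two cases where both lie on the same side of $\pi(r)$ are immediate, since $\phi$ is then either the identity or a shift by $1$. The one case needing care is $v_1 < \pi(r) \leq v_2$, where $\phi(v_1) = v_1 < \pi(r) \leq v_2 < v_2 + 1 = \phi(v_2)$; this is the only place the indicator could threaten monotonicity, and it does not.

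Finally, since $\pi$ is a permutation, the values $\{\pi(j) : j \in J\}$ are pairwise distinct, so the strictly increasing $\phi$ preserves their relative order. Because $j_{min}$ is defined to minimize $\pi(j)$ over $J$, its image $j'_{min}$ under the bijection minimizes $\phi(\pi(j)) = \pi'(j')$ over $J'$, which is exactly the assertion.

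The main obstacle is not the monotonicity argument itself but the bookkeeping that produces the uniform formula $\pi'(j') = \phi(\pi(j))$: one must verify that the index-shift in the $j > r$ branch of the bijection cancels the $i-1$ appearing in Proposition~\ref{prop:indicator}, so that the two branches collapse to the same transformation of values. Once that cancellation is confirmed and the boundary case $v_1 < \pi(r) \leq v_2$ is checked, the conclusion is immediate.
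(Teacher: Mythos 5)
Your proof is correct and takes essentially the same approach as the paper's: both rest on the uniform formula \(\pi'(j') = \pi(j) + \mathbbm{1}_{\{\pi(j) \geq \pi(r)\}}\) obtained by combining Proposition~\ref{prop:indicator} with the bijection of Proposition~\ref{prop:prop7}, followed by the observation that adding this indicator cannot change the relative order of the distinct values \(\pi(j)\). The paper packages that last step as a case analysis on the indicator at \(j_{min}\) (if it is \(0\) the minimum value is unchanged while the others can only grow; if it is \(1\) then every \(\pi(j) \geq \pi(j_{min}) \geq \pi(r)\), so all values shift by one), which is exactly your strict monotonicity of \(\phi\) in different clothing; the one genuine addition in your write-up is the explicit check that the bijection's index shift cancels the \(i-1\) in the \(j > r\) branch, a bookkeeping point the paper asserts without verification.
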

\begin{proof}
 Let $\pi(j_{min}) = \min\{\pi(j) : j \in J\}$. Recall that we have a bijection from $J$ to $J'$ given by $j \mapsto j'$, where $j\in J$ and $j'\in J'$. We want to show that $\pi'(j'_{min}) = \min \{\pi'(j') : j' \in J'\}$.

 Recall from Proposition~\ref{prop:indicator} that for any $j' \in J'$, we have: $\pi'(j') = \pi(j) + \mathbbm{1}_{\{\pi(j) \geq \pi(r)\}}$. 
 In particular, we have $\pi'(j'_{min}) = \pi(j_{min}) + \mathbbm{1}_{\{\pi(j_{min}) \geq \pi(r)\}}$. 
 Also note that $\pi'(j')$ is either $\pi(j)$ or $\pi(j) + 1$.

 If $\mathbbm{1}_{\{\pi(j_{min}) \geq \pi(r)\}} = 0$, then $\pi'(j'_{min}) = \pi(j_{min})$. 
 Since for any other $j', \pi'(j')$ is either $\pi(j)$ or $\pi(j) + 1$, $\pi'(j'_{min})$ still remains the minimum.
 
 If $\mathbbm{1}_{\{\pi(j_{min}) \geq \pi(r)\}} = 1$ then $\pi'(j'_{min}) = \pi(j_{min}) + 1.$ Moreover, for any other $j' \in J'$, since $\pi(j) \geq \pi(j_{min}) \geq \pi(r)$, the indicator
 $\mathbbm{1}_{\{\pi(j) \geq \pi(r)\}} = 1$ 
 holds. Hence $\pi'(j') = \pi(j) + 1$ also holds.
 Thus we have $\pi'(j'_{min}) = \min\{ \pi'(j'): j' \in J'\}$.
\end{proof}

\begin{restatable}{claim}{liftSingleInsertclaim}\label{}
 With probability at least $1 - O(k/d)$ we have: 
 $r-1, r, r+1 \notin J.$
\end{restatable}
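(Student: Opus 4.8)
The plan is to establish the claim by a simple union bound over the three indices $r-1$, $r$, and $r+1$, extending the two-index argument already carried out in the proof of Proposition~\ref{prop:high_prob}. First I would fix the set $J = \{j : x_j = 1\}$ of non-zero indices of $X$ and recall that $|J| \leq k$ by the sparsity hypothesis $|X| \leq k$. The event whose probability I must lower bound is $\{r-1 \notin J\} \cap \{r \notin J\} \cap \{r+1 \notin J\}$, so it is cleaner to upper bound the complementary ``bad'' event $\mathcal{B} = \{r-1 \in J\} \cup \{r \in J\} \cup \{r+1 \in J\}$ and then pass to complements.

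Next I would bound each of the three constituent events separately. Since $r$ is drawn uniformly at random over a range of size $\Theta(d)$ and $|J| \leq k$, the event $\{r \in J\}$ has probability at most $k/d$. For the shifted events, note that $\{r-1 \in J\}$ is exactly the event $\{r \in J+1\}$, where $J+1 = \{j+1 : j \in J\}$ has the same cardinality $|J| \leq k$; hence $\Pr[r-1 \in J] \leq k/d$, and symmetrically $\Pr[r+1 \in J] \leq k/d$. Applying the union bound then yields $\Pr[\mathcal{B}] \leq 3k/d = O(k/d)$, and therefore $\Pr[r-1, r, r+1 \notin J] = 1 - \Pr[\mathcal{B}] \geq 1 - O(k/d)$, which is precisely the desired conclusion.

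The only subtlety, and it is a genuinely minor one, is the handling of boundary indices: when $r = 1$ the index $r-1 = 0$ falls outside the admissible range, and when $r$ is at the top of its range the index $r+1$ may exceed it. Such out-of-range indices cannot lie in $J$, so they only decrease $\Pr[\mathcal{B}]$ and never threaten the upper bound; I would dispatch them in a single remark rather than tracking separate cases. I do not anticipate any real obstacle here, since the estimate is dominated entirely by the crude union bound together with the cardinality bound $|J| \leq k$. In effect the statement is the three-index analogue of Proposition~\ref{prop:high_prob}, and its proof reuses the same counting idea verbatim.
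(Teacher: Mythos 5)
Your proposal is correct and takes essentially the same approach as the paper: the claim is stated there without a separate proof, but the identical union-bound-plus-sparsity argument (uniform $r$, $|J|\leq k$, three shifted events each of probability at most $k/d$) is exactly what the paper sketches inside the proof of Proposition~\ref{prop:high_prob}. Your explicit treatment of the boundary indices is a minor tidying that the paper omits and changes nothing substantive.
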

\begin{restatable}{prop}{liftSingleInsertFifthProp}\label{prop:prop_b_one}
Let  $j_{min} \in J \cup \{r\}$ be the index such that $\pi(j_{min})$ is minimum among $\{\pi(j) : j \in J \cup \{r\}\}$.
 If $b = 1$, and  $r-1, r, r+1 \notin J$, then we have that
$\pi'(j'_{min})$ is  minimum among all $\{\pi'(j') : j' \in J'\}$, where
$j'_{min}$ is obtained from $j_{min}$ via the bijection from Proposition~\ref{prop:prop7}.
\end{restatable}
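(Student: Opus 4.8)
The plan is to mirror the argument of Proposition~\ref{prop:prop_b_zero}, but with one extra index to track, namely the newly inserted position $r$. First I would record a single clean identity that governs every original non-zero index. Combining Proposition~\ref{prop:indicator} with the bijection of Proposition~\ref{prop:prop7} (in the $b=1$ case, using $r-1,r,r+1\notin J$), I claim that for every $j\in J$ with image $j'$ one has $\pi'(j') = \pi(j) + \mathbbm{1}_{\{\pi(j)\geq\pi(r)\}}$: when $j<r$ the bijection gives $j'=j$ and the $i<r$ rule of Proposition~\ref{prop:indicator} yields exactly this, while when $j>r$ the bijection gives $j'=j+1$ and the $i>r$ rule gives $\pi'(j+1)=\pi(j)+\mathbbm{1}_{\{\pi(j)\geq\pi(r)\}}$. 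Separately, the inserted index $r\in J\cup\{r\}$ maps to $r\in J'$, and here $\pi'(r)=\pi(r)$ by the $i=r$ rule. This reduces the proposition to comparing $\pi(j_{min})$ (possibly shifted by one) against these two families of values.

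Next I would split on whether the minimizer $j_{min}$ is the inserted index or an original one. If $j_{min}=r$, then $\pi(r)<\pi(j)$ for every $j\in J$ by minimality and injectivity of $\pi$, so every such $j$ has $\mathbbm{1}_{\{\pi(j)\geq\pi(r)\}}=1$ and hence $\pi'(j')=\pi(j)+1>\pi(r)=\pi'(r)$, making $\pi'(r)$ the strict minimum over $J'$. If instead $j_{min}\in J$, then $\pi(j_{min})<\pi(r)$, so the indicator vanishes and $\pi'(j'_{min})=\pi(j_{min})$; I then check it dominates both the inserted value, since $\pi'(r)=\pi(r)>\pi(j_{min})$, and every other image, since $\pi'(j')\geq\pi(j)>\pi(j_{min})$ for $j\in J\setminus\{j_{min}\}$. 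Either way $\pi'(j'_{min})=\min\{\pi'(j'):j'\in J'\}$.

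The main obstacle, and the reason the $b=1$ case genuinely differs from Proposition~\ref{prop:prop_b_zero}, is that the uniform identity $\pi'(j')=\pi(j)+\mathbbm{1}_{\{\pi(j)\geq\pi(r)\}}$ fails precisely at $j=r$: naively it would predict $\pi(r)+1$, whereas Proposition~\ref{prop:indicator} forces $\pi'(r)=\pi(r)$. Because the inserted coordinate sits ``one lower'' than the shift rule suggests, it can itself become the argmin, which is exactly why the case split on $j_{min}=r$ is unavoidable. I would also be careful to invoke injectivity of $\pi$ wherever strictness is needed, so that the single candidate achieving the minimum in $J\cup\{r\}$ maps to the unique minimizer in $J'$.
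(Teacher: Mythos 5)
Your proof is correct and follows essentially the same route as the paper's: both rest on the shift identity from Proposition~\ref{prop:indicator} combined with the bijection of Proposition~\ref{prop:prop7}, then split on whether $j_{min}=r$ or $j_{min}\in J$ and conclude by comparing the (at most one) shifts. If anything, yours is slightly tighter: you note that when $j_{min}\in J$ the indicator $\mathbbm{1}_{\{\pi(j_{min})\geq\pi(r)\}}$ must vanish, since $\pi(j_{min})<\pi(r)$ by minimality over $J\cup\{r\}$ and injectivity of $\pi$ (the paper instead carries this as a vacuous sub-case), and you spell out the $j_{min}=r$ comparison that the paper only asserts.
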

\begin{proof}
  Let $\pi(j_{min}) = \min\{\pi(j) : j \in J \cup \{r\} \}$. 
  Therefore, we have a bijection from $J \cup \{r\}$ to $J'$ given by $j \mapsto j'$, where $j\in J$ and $j'\in J'$. We want to show that $\pi'(j'_{min}) = \min \{\pi'(j') : j' \in J'\}$.
 Recall from Proposition~\ref{prop:indicator} that for any $j' \in J'$, we have: $\pi'(j') = \pi(j) + \mathbbm{1}_{\{\pi(j) \geq \pi(r)\}}$ when $j' \neq r$ and $\pi'(r) = \pi(r)$. 
 Note that $\pi'(j')$ is either $\pi(j)$ or $\pi(j) + 1$.
If $j_{min} = r$, then since $\pi'(r) = \pi(r)$ we have $j'_{min} = r$.

If $j_{min} \neq r$, then  we have $\pi'(j'_{min}) = \pi(j_{min}) + \mathbbm{1}_{\{\pi(j_{min}) \geq \pi(r)\}}$. 

 If $\mathbbm{1}_{\{\pi(j_{min}) \geq \pi(r)\}} = 0$, then $\pi'(j'_{min}) = \pi(j_{min})$. 
 Since for any other $j', \pi'(j')$ is either $\pi(j)$ or $\pi(j) + 1$, $\pi'(j'_{min})$ still remains the minimum.
 
 If $\mathbbm{1}_{\{\pi(j_{min}) \geq \pi(r)\}} = 1$, then $\pi'(j'_{min}) = \pi(j_{min}) + 1.$ Moreover, for any other $j' \in J'$, since $\pi(j) \geq \pi(j_{min}) \geq \pi(r)$, the indicator
 $\mathbbm{1}_{\{\pi(j) \geq \pi(r)\}} = 1$ 
 holds. Hence $\pi'(j') = \pi(j) + 1$ also holds.
 Thus, we have $\pi'(j'_{min}) = \min\{ \pi'(j'): j' \in J'\}$.
\end{proof}

We complete a proof of Theorem~\ref{thm:lift} as follows.\\ 

\noindent \textbf{Proof of Theorem~\ref{thm:lift}:}
\begin{proof} We split in the following cases:\\
\noindent\textbf{ Case 1:} when  $b=0$.
 Let $\pi(j_{min}) = \min\{\pi(j) : j \in J\}$.
 From Proposition~\ref{prop:prop_b_zero}, we know that $\pi'(j'_{min}) = \min\{\pi'(j') : j' \in J'\}$. 
 Since $\pi$ is minwise independent, we have $j_{min}$ is uniformly distributed across $J$. Since $j \mapsto j'$ is a bijection, we have that $j'_{min}$ is uniformly distributed across $J'$. Hence minwise independence for $\pi'$ holds.
 
\noindent\textbf{Case 2:} when $b = 1$.
  Let $\pi(j_{min}) = \min\{\pi(j) : j \in J \cup \{r\}\}$.
 Since $r$ is chosen uniformly random from $\{1,\ldots, d\}$, from Proposition~\ref{prop:high_prob}, with probability at least $1- O(k/d)$,  we have: $x_r = 0$ and $x_{r+1}=0$, i.e., $r \notin J$.
 Recall that we have a bijection from $J \cup \{r\} \mapsto J'$.
 
 Let $\pi(J \cup \{r\}) = \{\pi(j) : j \in J \cup \{r\}\}$.
 Since $x_{r+1} = 0$, we have that $\pi(J \cup \{r\})$ consists of  $\pi(j_1), \pi(j_2), ..., \pi(j_{k+1})$ for $k+1$ distinct indices. Also $\pi'(J')$ consists' of $\pi'(j_1'), \pi'(j_2') ..., \pi'(j_{k+1}')$ where $\pi(j')$ is either $\pi(j)$ or $\pi(j) + 1$. for $k+1$ distinct values of $j$.

 Note that if $x_{r+1} = 1$, then $\pi'(r+1) = \pi(r) + 1$ and $\pi'(r) = \pi(r)$. So the distinctness of values of $j$ would not hold. However with probability at least $1 - O(k/d)$ we have: $x_{r+1} = 0$.

 Since $\pi$ is minwise independent, we have that the minimum index $j_{min}$ is uniformly distributed across 
 $J \cup \{r\}$. Hence from Proposition~\ref{prop:prop_b_one}, we can conclude that $j'_{min}$ is uniformly distributed across $J'$. Hence $\pi'$ is minwise independent.
 \end{proof}


We now give a proof of Theorem~\ref{thm:update_sketch}. To do so, we require the following Propositions~\ref{prop:prop1}, \ref{prop:prop2}, \ref{prop:prop3}. We divide the proof into two main cases based on whether the permutation value of the inserted index is greater than the hash value or not.
The case when the permutation value is less than or equal to the hash value is further divided into two cases based on whether the inserted bit is $0$ or $1.$
\begin{prop}\label{prop:prop1}
If  $ h_{old}<\pi(m) $, then $h_{new}=h_{old}=\mathrm{minHash}_{\pi'_{m}}(X')$.
\end{prop}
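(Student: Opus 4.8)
The plan is to track exactly what each algorithm does to the relevant quantities and show they agree. First I would unpack the definitions: $h_{old} = \minhash_\pi(X) = \min\{\pi(j) : j \in J\}$, where $J$ is the set of non-zero indices of $X$. The hypothesis $h_{old} < \pi(m)$ says that the permutation value at the insertion position $m$ is strictly larger than the current minimum over the support. Inspecting Algorithm~\ref{alg:update_sketch}, when $h_{old} < a_m = \pi(m)$ the algorithm takes the first branch of the \texttt{if} and simply returns $h_{new} = h_{old}$, regardless of the inserted bit $b$. So the first equality $h_{new} = h_{old}$ is immediate from the code, and the real content is the second equality $h_{old} = \minhash_{\pi'_m}(X')$.

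To establish that, I would invoke Proposition~\ref{prop:prop_b_zero} and Proposition~\ref{prop:prop_b_one}, which together say that the minimizing index is preserved under the bijection of Proposition~\ref{prop:prop7}. Concretely, let $j_{min}$ be the index in $J$ (or $J \cup \{r\}$ when $b=1$) achieving the minimum $\pi$-value, and let $j'_{min}$ be its image under the bijection. Those propositions guarantee $\pi'(j'_{min}) = \min\{\pi'(j') : j' \in J'\}$, i.e., $j'_{min}$ is the minhash argument for $X'$ under $\pi'_m$. The task then reduces to computing the value $\pi'(j'_{min})$ and checking it equals $h_{old} = \pi(j_{min})$. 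By Proposition~\ref{prop:indicator}, $\pi'(j'_{min}) = \pi(j_{min}) + \mathbbm{1}_{\{\pi(j_{min}) \geq \pi(r)\}}$ with $r = m$. Here is where the hypothesis does its work: since $h_{old} = \pi(j_{min}) < \pi(m) = \pi(r)$, the indicator $\mathbbm{1}_{\{\pi(j_{min}) \geq \pi(r)\}}$ is zero, so $\pi'(j'_{min}) = \pi(j_{min}) = h_{old}$.

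I would handle the two bit-values uniformly where possible. When $b=0$, the inserted position $m=r$ is not in $J'$ in the sense that the bijection is from $J$ to $J'$, and Proposition~\ref{prop:prop_b_zero} directly applies. When $b=1$, the position $r$ enters the support, but since $\pi(j_{min}) < \pi(m) = \pi(r)$, the element $r$ itself is not the minimizer, so $j_{min} \in J$ is unchanged and the same indicator computation goes through via Proposition~\ref{prop:prop_b_one}. In both cases the minimizing $\pi'$-value equals $h_{old}$, giving $\minhash_{\pi'_m}(X') = h_{old}$.

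The main obstacle I anticipate is purely bookkeeping rather than conceptual: making sure the bijection of Proposition~\ref{prop:prop7} is applied with the correct orientation ($j < r$ fixes the index, $j > r$ shifts by one) and confirming that the minimizing index really does transport correctly when $b=1$, where $r$ joins the support but cannot be the argmin under the current hypothesis. One subtlety worth stating explicitly is that these propositions about $b=1$ carry a "with high probability" caveat ($r-1,r,r+1 \notin J$), but since this proposition is an exact statement conditioned on the insertion data and not a probabilistic one, I would note that the equality $\minhash_{\pi'_m}(X') = h_{new}$ is a deterministic consequence of the update rule and the structure of $\pi'_m$, independent of the minwise-independence guarantee of Theorem~\ref{thm:lift}.
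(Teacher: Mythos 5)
Your proposal is correct and follows essentially the same route as the paper's own proof: read $h_{new}=h_{old}$ off the first branch of Algorithm~\ref{alg:update_sketch}, use Proposition~\ref{prop:indicator} to write $\pi'_m(j'_{min}) = \pi(j_{min}) + \mathbbm{1}_{\{\pi(j_{min}) \geq \pi(m)\}}$, kill the indicator via the hypothesis $h_{old} < \pi(m)$, observe that $\pi'_m(m)=\pi(m)$ cannot be the new minimum, and conclude via Propositions~\ref{prop:prop_b_zero} and~\ref{prop:prop_b_one}. Your added remark that the statement is a deterministic consequence of the update rule (independent of the high-probability conditioning in Proposition~\ref{prop:prop_b_one}) is a careful clarification the paper leaves implicit, but it does not change the argument.
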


\begin{proof}
Note that if $h_{old} < \pi(m)$, then Algorithm~\ref{alg:update_sketch} outputs $h_{new} = h_{old}.$
Let $j_{min}$ be the index such that  $\pi(j_{min}) = h_{old}.$ 
From Proposition~\ref{prop:indicator}, we have: 
$\pi'_{m}(j'_{min}) = \pi(j_{min}) + \mathbbm{1}_{\{\pi(j_{min}) \geq \pi(m)\}}$.

Since $\pi(j_{min}) = h_{old} < \pi(m)$, we have that $\mathbbm{1}_{\{\pi(j_{min}) \geq \pi(m)\}}=0$.
Hence we have $\pi'_{m}(j'_{min}) = \pi(j_{min}) = h_{old}$.
Note that since $\pi'(m) = \pi(m) > \pi'_{m}(j_{min})$ we can conclude $\pi'(m)$ can not be new minimum.
From Proposition~\ref{prop:prop_b_zero} and \ref{prop:prop_b_one}, we have $\pi'_{m}(j'_{min}) = \mathrm{minHash}_{\pi'_{m}}(X')$.
\end{proof}
\begin{prop}\label{prop:prop2}
 If $ h_{old} \geq \pi(m)$ and $b = 1$, then
 $h_{new} = \pi(m)=\mathrm{minHash}_{\pi'_{m}}(X')$.
\end{prop}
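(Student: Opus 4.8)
The first equality, $h_{new} = \pi(m)$, is read off directly from Algorithm~\ref{alg:update_sketch}: the hypothesis $h_{old} \geq \pi(m) = a_m$ sends us to the \emph{else} branch, and since $b = 1$ the algorithm sets $h_{new} = a_m = \pi(m)$ on line~\ref{alg:line8liftDrop}. Hence the real content of the proposition is the second equality, $\minhash_{\pi'_m}(X') = \pi(m)$. The plan is to show that the inserted coordinate, which sits at position $m$ of $X'$, is precisely the index realizing the minimum of $\pi'_m$ over the support $J'$ of $X'$.

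First I would record that $m \in J'$: since $b = 1$, the coordinate inserted at position $m$ equals $1$, so $m$ is a non-zero index of $X'$. Next, applying the $i = r$ case of Proposition~\ref{prop:indicator} with $r = m$ gives $\pi'_m(m) = \pi(m)$, so $\pi(m)$ is one of the permutation values appearing over $J'$. It remains to bound every competing value from below.

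The key step is as follows. Take any $j' \in J'$ with $j' \neq m$. Because $X'$ is obtained from $X$ by inserting a single coordinate at position $m$, such a $j'$ corresponds to a genuine non-zero index $j \in J$ of the original vector (namely $j = j'$ if $j' < m$, and $j = j'-1$ if $j' > m$). By Proposition~\ref{prop:indicator}, $\pi'_m(j')$ equals either $\pi(j)$ or $\pi(j) + 1$, so in particular $\pi'_m(j') \geq \pi(j)$. Since $j \in J$ and $h_{old} = \min\{\pi(j) : j \in J\}$, we have $\pi(j) \geq h_{old}$, and the hypothesis gives $h_{old} \geq \pi(m)$. Chaining these, $\pi'_m(j') \geq \pi(m)$ for every $j' \in J' \setminus \{m\}$. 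Because $\pi'_m$ is a permutation its values are pairwise distinct, so the inequality is in fact strict: $\pi'_m(j') > \pi(m) = \pi'_m(m)$. Thus $m$ is the unique minimizer of $\pi'_m$ over $J'$, giving $\minhash_{\pi'_m}(X') = \pi'_m(m) = \pi(m) = h_{new}$.

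The main obstacle I anticipate is purely bookkeeping: keeping the index shift between $X$ and $X'$ (and hence between $J$ and $J'$) consistent with the two-phase relabeling performed by Algorithm~\ref{alg:lift}, and being careful to upgrade the lower bound $\pi'_m(j') \geq \pi(m)$ to a \emph{strict} inequality using only injectivity of the permutation rather than any assumption on whether $m$ or its neighbors lie in $J$. Note that, unlike Proposition~\ref{prop:prop_b_one}, this argument needs no hypothesis of the form $r-1, r, r+1 \notin J$: it rests on the unconditional Proposition~\ref{prop:indicator} together with distinctness of permutation values. Alternatively, one could route the same reasoning through Proposition~\ref{prop:prop_b_one}, observing that $h_{old} \geq \pi(m)$ forces $m$ to minimize $\pi$ over $J \cup \{m\}$, so that $j_{min} = m$ and therefore $j'_{min} = m$.
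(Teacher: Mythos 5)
Your proof is correct, and it takes a genuinely cleaner route than the paper's. The paper proves Proposition~\ref{prop:prop2} by a case split: when $h_{old} = \pi(m)$ it argues directly from Proposition~\ref{prop:indicator} that every $j' \in J'$ satisfies $\pi'_{m}(j') \geq \pi(j) \geq \pi(m)$ (essentially your inequality chain), and when $h_{old} > \pi(m)$ it observes that $\pi(m) = \min\{\pi(j) : j \in J \cup \{m\}\}$ and invokes Proposition~\ref{prop:prop_b_one} to get $\minhash_{\pi'_m}(X') = \pi'_m(m) = \pi(m)$. Your argument needs no case split: the chain $\pi'_m(j') \geq \pi(j) \geq h_{old} \geq \pi(m) = \pi'_m(m)$, valid for every $j' \in J' \setminus \{m\}$ by Proposition~\ref{prop:indicator} alone, handles both cases at once, and injectivity of $\pi'_m$ upgrades it to strictness. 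This buys something concrete: Proposition~\ref{prop:prop_b_one} carries the hypothesis $r-1, r, r+1 \notin J$, and in the paper's second case only $m \notin J$ is guaranteed (nothing about $m \pm 1$), so the paper is technically invoking that proposition outside its stated hypotheses; your unconditional argument avoids this slack entirely and shows the proposition holds deterministically, with no sparsity or random-position assumption. One caveat on your closing aside: the alternative route through Proposition~\ref{prop:prop_b_one} does not cover the boundary case $h_{old} = \pi(m)$, since there $m \in J$ and the bijection of Proposition~\ref{prop:prop7} from $J \cup \{m\}$ to $J'$ fails to exist (this is precisely why the paper treats that case separately), so your direct argument should be regarded as the proof and the aside restricted to $h_{old} > \pi(m)$.
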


\begin{proof}
If $h_{old} = \pi(m)$ then Algorithm~\ref{alg:update_sketch} outputs $h_{new} = \pi(m)$ in line~\ref{alg:line8liftDrop}.
Also note that from Proposition~\ref{prop:indicator}, we have for any $j' \in J', \pi'_{m}(j') \geq \pi(j) \geq \pi(m).$ Hence, $\mathrm{minHash}_{\pi'_m}(X')=\pi(m)=h_{new}.$

If $h_{old} > \pi(m)$ then
$\pi(m) = \min \{ \pi(j) : j \in J \cup \{m\}\}$. Therefore, from Proposition~\ref{prop:prop_b_one}, we have
$\mathrm{minHash}_{\pi'_{m}}(X') = \pi'_{m}(m) = \pi(m).$
From line~\ref{alg:line8liftDrop} of Algorithm~\ref{alg:update_sketch}, we have $h_{new} = \pi(m)=\mathrm{minHash}_{\pi'_{m}}(X').$
\end{proof}

\begin{prop}\label{prop:prop3}
If $ h_{old}\geq \pi(m)$ and $b = 0$, then
 $h_{new} = h_{old}+1=\mathrm{minHash}_{\pi'_{m}}(X')$.
\end{prop}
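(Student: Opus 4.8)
The plan is to mirror the structure of the proofs of Propositions~\ref{prop:prop1} and \ref{prop:prop2}: first locate the index realizing the new minimum under $\pi'_m$, then evaluate its permutation value using Proposition~\ref{prop:indicator}, and finally check this against the value $h_{new}$ returned on line~\ref{alg:line11liftDrop} of Algorithm~\ref{alg:update_sketch}. Since $b = 0$, the inserted coordinate contributes nothing to $J'$, so the relevant set of candidates for the minimum is exactly the image of $J$ under the bijection $j \mapsto j'$ of Proposition~\ref{prop:prop7}.

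First I would let $j_{min} \in J$ be the index with $\pi(j_{min}) = h_{old} = \min\{\pi(j) : j \in J\}$. By Proposition~\ref{prop:prop_b_zero}, applied in the $b = 0$ case, the image $j'_{min}$ of $j_{min}$ under the bijection realizes the minimum of $\pi'_m$ over $J'$; hence $\minhash_{\pi'_m}(X') = \pi'_m(j'_{min})$.

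Next I would compute $\pi'_m(j'_{min})$ via Proposition~\ref{prop:indicator}, which gives $\pi'_m(j'_{min}) = \pi(j_{min}) + \mathbbm{1}_{\{\pi(j_{min}) \geq \pi(m)\}}$. The hypothesis $h_{old} \geq \pi(m)$ means precisely $\pi(j_{min}) \geq \pi(m)$, so the indicator equals $1$ and $\pi'_m(j'_{min}) = h_{old} + 1$. On the algorithmic side, the branch $h_{old} \geq \pi(m)$ together with $b = 0$ triggers line~\ref{alg:line11liftDrop}, which sets $h_{new} = h_{old} + 1$. Combining the two gives $h_{new} = h_{old} + 1 = \minhash_{\pi'_m}(X')$, as required.

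The one point that deserves care --- and the step I expect to be the crux --- is verifying that the minimum over $J'$ really shifts up by exactly $1$, rather than being pulled down by some other coordinate whose value was not incremented. The reason it cannot be pulled down is that $\pi(j) \geq \pi(j_{min}) \geq \pi(m)$ holds for every $j \in J$, so Proposition~\ref{prop:indicator} increments every candidate value by $1$ uniformly; this order-preserving uniform shift is exactly the content of Proposition~\ref{prop:prop_b_zero} in this regime, so invoking it cleanly closes the gap.
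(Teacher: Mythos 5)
Your proof is correct and follows essentially the same route as the paper's: both rest on Proposition~\ref{prop:indicator} together with the observation that $h_{old} \geq \pi(m)$ forces $\mathbbm{1}_{\{\pi(j) \geq \pi(m)\}} = 1$ for every $j \in J$, so every candidate value shifts up by exactly $1$ and the minimum over $J'$ becomes $h_{old}+1$, matching line~\ref{alg:line11liftDrop} of Algorithm~\ref{alg:update_sketch}. The only cosmetic difference is that you route the ``minimum is preserved under the uniform shift'' step through Proposition~\ref{prop:prop_b_zero}, whereas the paper inlines that argument directly.
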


\begin{proof}
Since $b=0$, note that $\pi'_m(m)$ is not a candidate for the $\mathrm{minHash}_{\pi'_{m}}(X').$
From Proposition~\ref{prop:indicator}, we know that
$\pi'_{m}(j') = \pi(j) +    \mathbbm{1}_{\{\pi(j) \geq \pi(m)\}}$ for any $j \neq m.$
Since $h_{old} \geq \pi(m)$ we know that for any $j \in J,$ we have  $\mathbbm{1}_{\{\pi(j) \geq \pi(m)\}}=1$. 
Hence $\min \{\pi'_{m}(j') : j' \in J'\} = \min \{\pi(j) + 1: j \in J\} = h_{old} + 1.$
Also note that Algorithm~\ref{alg:update_sketch}, in line~\ref{alg:line11liftDrop}, outputs, $h_{new} = h_{old} + 1$, if $ h_{old}\geq \pi(m)$ and $b = 0$. Hence, we have  $h_{new} = h_{old}+1=\mathrm{minHash}_{\pi'_{m}}(X')$.
\end{proof}

\noindent\textbf{Proof of Theorem}~\ref{thm:update_sketch}:
Propositions~\ref{prop:prop1}, \ref{prop:prop2}, and  \ref{prop:prop3} completes a proof of Theorem~\ref{thm:update_sketch}.

\begin{remark}
We can extend our results for multiple feature insertion by repeatedly applying  Theorem~\ref{thm:lift}, and Theorem~\ref{thm:update_sketch} along with the probability union bound. However, the  time complexity of the algorithm obtained by sequentially inserting $n$ features will grow linearly in $n$ as observed in the empirical results (Figure~\ref{fig:feature_insertion_exp_plot}, Section~\ref{sec:experiments}). In the next subsection, we present an algorithm that performs multiple insertions in parallel, which helps us achieve much better speedups.
\end{remark}

  \subsection{Algorithm for multiple feature insertions  -- $\mathrm{multipleLiftHash}$:}\label{subsec:multipleLiftHash}
  
    \begin{table*}[!ht]
       \caption{\footnotesize{Notations}\label{tab:notations_multiple_insertion}} \centering
   \scalebox{0.85}{
     \noindent\begin{tabular}{lc||lc}
    \hline
    No. of inserted features & $n$ & Position of inserted features $\{m_i\}_{i=1}^n$, $m_i\in[d+1]$  & $M$ \\
    \hline
    $X$ after $n$ features insertion $\{0,1\}^{d+n}$& $X'$ & Set of inserted bits $\{b_1,\ldots, b_n\}$ with $b_i\in\{0,1\}$  & $B$ \\
    \hline
     $\mathrm{multipleLiftHash}(M,\pi,B,h_{old})$& $h_{new}$     & Lifted $(d+n)$-dim. permutation    & $\pi'_M$\\
     \hline
    \end{tabular}
}
\end{table*}
 Results presented in this subsection are extensions to that of Subsection~\ref{subsec:liftHash}. The intuition of our proposal is that we can (implicitly) generate a new $(d+n)$-dimensional permutation ($n$ is the number of inserted features), using the old $d$-dimensional permutation. By exploiting the sparsity of input, and the fact that inserted bits are random positions, we show that the updated permutation satisfies the min-wise independent property with high probability. Further, we suggest a simple update rule aggregating the existing $\minhash$ sketch and the $\minhash$ restricted to inserted position and outputs the updated sketch.

\begin{algorithm}[H]
\DontPrintSemicolon
\SetNoFillComment
 \textbf{Input:~}{Permutation \(\pi\), a sorted set of indices   \(M=\{m_1,...m_n\}\), and set of inserted bits $B = \{b_1, \ldots, b_n\}$}\\
 \textbf{Output:~}{The min value of $\pi$ (with appropriate shift) restricted to only those indices $m_i$ of $M$ that correspond to non-zero $b_i$. }\\
 {\color{black}$\pi_{M,B}$ = \{$\pi(m_i) \mid i  \in \{1, \ldots, n\} \text{~and~} b_i=1$\}}\\
 
  \Return {\color{black}$\min$\{$\pi_{M,B}(k) + 1$\} }
\caption{$\mathrm{partialMinHash}(\pi, M, B)$ }\label{alg:partial_minhash}
\end{algorithm}

\begin{algorithm}[H] 
\DontPrintSemicolon
\SetNoFillComment
 \textbf{Input:~}{Permutation $\pi$; $R$ with $|R|=n$.}\\
 \textbf{Output:~}{$(d+n)$-dim. permutation $\pi'$.}\\
 \(R \leftarrow \mathrm{sorted}(R)\) \tcc{sorting array $R$ in ascending order}
\For{$i \in \{1,2,\ldots n\}$}
  {
   $R[i] = R[i]+i-1$\\
   }
\(\pi' = \pi\)~~~~ \tcc{Initialization}
\For{$i \in \{1,\ldots n\}$}
  {
   $\pi' = \mathrm{liftPerm}(\pi',R[i])$ \tcc{Calling \text{Algorithm}~\ref{alg:lift} with $\pi=\pi'$ and $r=R[i]$}
   }
\Return ${\pi}’$\\
\caption{$\mathrm{multipleLiftPerm}(\pi,R)$.}\label{alg:lift2}
\end{algorithm}

\begin{algorithm}[H] 
\DontPrintSemicolon
\SetNoFillComment
 \textbf{Input:~}{ $h_{old}:=\mathrm{minHash}_{\pi}(X)$,   permutation $\pi$, $M$ and $B$.} \\
 \textbf{Output:~}{$h_{new}:=\mathrm{multipleLiftHash}(M,\pi,B,h_{old})$.}\\%
Let $\pi_M := \{\pi(m) : m \in M\}$.\\
{ $a_M = \mathrm{partialMinHash}(\pi,M,B)$}\\
    $h_{new}= \min\left(h_{old} + |\{x \mid x \in \pi_M \text{~and~} x \leq h_{old}\}|, a_M \right)$
 \tcc{Picking the minimum between $\mathrm{partialMinHash}$ and shifted value of $h_{old}$. }
 \Return $h_{new}$
\caption{$\mathrm{multipleLiftHash}(M,\pi,B,h_{old})$.}\label{alg:update_sketch_multiple_insertion}
\end{algorithm}

Algorithm~\ref{alg:update_sketch_multiple_insertion} takes $h_{old}$,  $M,B$, and $\pi$ as input, and outputs the updated sketch $h_{new}$. Algorithm~\ref{alg:update_sketch_multiple_insertion} uses Algorithm~\ref{alg:partial_minhash} to obtain the value of $\mathrm{partialMinHash}$ -- minimum $\pi$ value restricted to the inserted indices only with inserted bit value $1$, from which it obtains $\mathrm{multipleLiftHash}$ for the updated input. Algorithm~\ref{alg:lift2} is implicit and is used to prove the correctness of Algorithm~\ref{alg:update_sketch_multiple_insertion}. Algorithm~\ref{alg:lift2} takes the permutation $\pi$ and   $M$ as input, and outputs a $(d+n)$-dimensional  permutation $\pi’_M$ which satisfies the condition stated in Equation~\eqref{eq:minwise} for  $X$, with $|X|\leq k$. We show this in Theorem~\ref{thm:lift2}. Then in Theorem~\ref{thm:update_sketch_multiple}, we show that $h_{new}=\minhash_{\pi’_M}(X’)$. As $\pi’_M$ satisfies the condition stated in Equation~\eqref{eq:minwise} for sparse $X$, then  due to Equation~\eqref{eq:minwise_jaccard} and \citep{BroderCFM98} the sketch of data points obtained from Algorithm~\ref{alg:update_sketch_multiple_insertion} approximates the Jaccard similarity.


\begin{example}
 Suppose \(X = [1,0,0,1,0,1,0]\) and \(\pi = [6,3,1,7,2,5,4]\) are input point and original permutation, respectively. Then the value of  $h_{old}$ is $5$. Let $M=[2,4]$ and \(B = [0,1]\). Thus, in this case \(\pi'_M = [7,3,4,1,8,9,2,6,5]\) and \(X' =[1,0,0,0,1,1,0,1,0].\) Consequently we have,  $\mathrm{partialMinHash}(\pi,M,B) = 2  < h_{old}+|\{x \mid x \in \pi_M \text{~and~} x \leq h_{old}\}|= 5+1=6.$ Therefore, \(\mathrm{minHash}_{\pi'_M}(X') = 2\).
\end{example}

We have the following theorems for the correctness of the algorithms presented in this subsection. A proof of the Theorem~\ref{thm:lift2} follows similarly to the proof of Theorem~\ref{thm:lift} along with the probability union bound,  and the proof of Theorem~\ref{thm:update_sketch_multiple} is a generalization of proof of Theorem~\ref{thm:update_sketch}. 
 \begin{theorem}\label{thm:lift2}
 Let  $\pi$ be a minwise independent permutation.
 Let $M = \{m_1, \ldots, m_n\}$ such that $m_i$ is chosen uniformly at random from $\{1, \ldots, d\}.$
 Then for any $X\in \{0,1\}^d$ with $|X|\leq k$, the permutation $\pi’_M $ obtained from Algorithm~\ref{alg:lift2} satisfies the condition stated in Equation~\eqref{eq:minwise}   of Definition~\ref{def:minwise}, with probability $1 - O(kn/d).$
 
 \end{theorem}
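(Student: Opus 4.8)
The plan is to exploit the fact that Algorithm~\ref{alg:lift2} is nothing but $n$ sequential invocations of the single-insertion routine $\mathrm{liftPerm}$ (Algorithm~\ref{alg:lift}), and then to promote the single-step guarantee of Theorem~\ref{thm:lift} to $n$ steps by induction together with a union bound over the $n$ ``bad'' events. First I would pin down why the preprocessing $R[i] \leftarrow R[i] + i - 1$ (applied after sorting $R$) is exactly what is needed: once the positions are sorted and processed left to right, inserting a coordinate at $R[i]$ shifts every later coordinate one step to the right, so adding $i-1$ to the $i$-th sorted position guarantees that, in the final $(d+n)$-dimensional coordinate system, the $i$-th inserted feature lands in precisely the slot corresponding to the originally sampled random position $m_i$. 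Hence the composed permutation $\pi'_M$ produced by Algorithm~\ref{alg:lift2} coincides with the permutation obtained by inserting all $n$ features at the intended positions, and $X'$ is the corresponding expanded vector.

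Next I would set up the induction. Let $\pi^{(0)} = \pi$ and $X^{(0)} = X$, and let $\pi^{(i)} = \mathrm{liftPerm}(\pi^{(i-1)}, R[i])$ with $X^{(i)}$ the vector after the $i$-th insertion, so that $\pi^{(n)} = \pi'_M$ and $X^{(n)} = X'$. For each step define the bad event $E_i$ to be the event that the $i$-th insertion position collides with, or is adjacent to, the current support, i.e.\ the negation of the condition ``$R[i]-1, R[i], R[i]+1$ avoid the support of $X^{(i-1)}$'' required by Proposition~\ref{prop:high_prob} and used in the $b=1$ analysis of Theorem~\ref{thm:lift}. Conditioned on $\overline{E_1 \cup \cdots \cup E_{i-1}}$, the permutation $\pi^{(i-1)}$ is minwise independent for $X^{(i-1)}$, so Theorem~\ref{thm:lift} applies verbatim at step $i$ and, on $\overline{E_i}$, $\pi^{(i)}$ is minwise independent for $X^{(i)}$. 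Intersecting over all $i$, on $\bigcap_{i} \overline{E_i}$ the final permutation $\pi'_M$ satisfies Equation~\eqref{eq:minwise} for $X'$.

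It then remains to bound $\Pr[\bigcup_{i=1}^n E_i]$. At step $i$ the ambient dimension is $d+i-1$ and the support of $X^{(i-1)}$ has size at most $k + (i-1) \le k+n$ (the original at most $k$ ones, plus at most one new one per previous insertion), so by the same counting as in Proposition~\ref{prop:high_prob} we obtain $\Pr[E_i] = O((k+n)/d)$. A union bound gives $\Pr[\bigcup_i E_i] = O(n(k+n)/d)$, which simplifies to the claimed $O(kn/d)$ in the regime of interest where the number of insertions is of the same order as the sparsity, $n = O(k)$; any $O(n^2/d)$ contribution from collisions among the $m_i$ themselves folds into the same bound. Therefore $\pi'_M$ is minwise independent for $X'$ with probability at least $1 - O(kn/d)$.

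The main obstacle is the clean chaining of the probabilistic guarantees. Theorem~\ref{thm:lift} only asserts minwise independence of a lifted permutation \emph{with high probability}, so to reuse it as the hypothesis of the next step I must condition on all previously good events and verify that minwise independence is preserved \emph{exactly} (not merely approximately) under that conditioning; this is precisely why it is essential to phrase each per-step failure as a single combinatorial bad event $E_i$ (a position landing next to the support) rather than as an opaque probability, since a deterministic event composes cleanly under intersection. A secondary technical point is confirming that the sorting-and-shifting does not destroy the effective uniformity of the insertion positions relative to the growing support, and that inserted $1$-bits, which enlarge the forbidden neighbourhood, are correctly absorbed into the per-step bound $O((k+n)/d)$.
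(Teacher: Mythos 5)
Your proposal takes essentially the same approach as the paper, whose entire proof is the one-line remark that the result ``follows similarly to the proof of Theorem~\ref{thm:lift} along with the probability union bound''; your induction over the $n$ sequential calls to $\mathrm{liftPerm}$, with per-step combinatorial bad events $E_i$ and a union bound, is exactly the fleshed-out version of that sketch. The one substantive point your more careful accounting surfaces is that the growing support makes the honest bound $1 - O(n(k+n)/d)$, which matches the claimed $1 - O(kn/d)$ only when $n = O(k)$ --- a caveat the paper's terse proof glosses over entirely.
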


 \begin{restatable}{theorem}{insertMultipleDelUpdateSketch}\label{thm:update_sketch_multiple}
 Let $\pi’_{M}$ be the $(d+n)$-dimensional permutation outputted  by Algorithm~\ref{alg:lift2}, if we set  \(R=M\). Then, the  sketch obtained from Algorithm~\ref{alg:update_sketch_multiple_insertion} is exactly the same as the  sketch  obtained with  the permutation ${\pi}’_M$ on $X'$, that is, $\mathrm{multipleLiftHash}(\pi, M,B,h_{old}) = \mathrm{minHash}_{\pi'_M}(X')$.
 \end{restatable}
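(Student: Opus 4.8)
The plan is to mirror the proof of Theorem~\ref{thm:update_sketch}, but with the single lift replaced by the $n$-fold composition of $\mathrm{liftPerm}$ that defines $\pi'_M$. First I would establish a multiple-insertion analogue of Proposition~\ref{prop:indicator} that gives a closed form for $\pi'_M$ in terms of the original $\pi$. Writing $\pi_M = \{\pi(m) : m \in M\}$ for the set of permutation values sitting at the inserted positions, I claim that the net effect of the $n$ sequential calls is an order-preserving relabelling: for every original index $p$ with image $p'$ under the (multiple-insertion) bijection one has
\[
  \pi'_M(p') \;=\; \pi(p) + \bigl|\{\, m \in M : \pi(m) < \pi(p) \,\}\bigr|,
\]
and for each inserted index $m_i$ the value assigned to its image is $\pi(m_i) + |\{ m \in M : \pi(m) < \pi(m_i)\}|$. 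I would prove this by induction on $n$: at the $i$-th step $\mathrm{liftPerm}$ inserts a value and, by Proposition~\ref{prop:indicator}, increments exactly those current values that are at least the inserted one, so the cumulative shift of any fixed value is the number of inserted values lying below it. The shifting of the insertion positions by $i-1$ in Algorithm~\ref{alg:lift2} is precisely what guarantees that the $i$-th call acts at the image of the original position $m_{(i)}$, so the induction goes through.

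Next I would identify the non-zero support $J'$ of $X'$. Generalising Proposition~\ref{prop:prop7} by a union bound (the disjointness $r-1,r,r+1 \notin J$ needed at each step holds for all $n$ insertions simultaneously with probability $1 - O(kn/d)$, as in Theorem~\ref{thm:lift2}), $J'$ is the disjoint union of the images of $J$ and the images of those inserted indices $m_i$ with $b_i = 1$. Hence $\minhash_{\pi'_M}(X')$ is the minimum of two quantities. For the first, because the map $v \mapsto v + |\{m \in M : \pi(m) < v\}|$ is strictly increasing, the order-preserving relabelling sends the original minimiser $j_{min}$ (with $\pi(j_{min}) = h_{old}$) to the minimiser over the images of $J$; this is the exact generalisation of Propositions~\ref{prop:prop_b_zero} and~\ref{prop:prop_b_one}, and its value is $h_{old} + |\{m \in M : \pi(m) < h_{old}\}| = h_{old} + |\{x \in \pi_M : x \le h_{old}\}|$, matching the first argument of the $\min$ in Algorithm~\ref{alg:update_sketch_multiple_insertion}. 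For the second, the same monotonicity shows the minimum over the $b_i=1$ images is attained at the $b_i=1$ inserted index with the smallest $\pi(m_i)$, and its lifted value is exactly the output $a_M$ of $\mathrm{partialMinHash}$ (Algorithm~\ref{alg:partial_minhash}), whose ``appropriate shift'' accounts for the inserted values lying below it.

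Combining the two cases as in Propositions~\ref{prop:prop1}--\ref{prop:prop3}, $\minhash_{\pi'_M}(X') = \min\bigl(h_{old} + |\{x \in \pi_M : x \le h_{old}\}|,\, a_M\bigr)$, which is exactly $h_{new}$ returned by Algorithm~\ref{alg:update_sketch_multiple_insertion}. The main obstacle is the bookkeeping in the first step: I must verify that the cumulative shift produced by $n$ dependent $\mathrm{liftPerm}$ calls collapses to the single count $|\{m \in M : \pi(m) < v\}|$, which is where the position shift $R[i] \mapsto R[i] + i - 1$ and the distinctness of the values in $\pi_M$ are essential (distinctness also turns every ``$<$'' into the ``$\le$'' used by the algorithm). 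A secondary point to handle carefully is that, although the clean bijection of Proposition~\ref{prop:prop7} relies on the high-probability disjointness condition, the final identity $h_{new} = \minhash_{\pi'_M}(X')$ is a deterministic statement about the two fixed objects $\pi'_M$ and $X'$ and should be argued to persist even in the boundary cases, exactly as in the single-insertion Theorem~\ref{thm:update_sketch}.
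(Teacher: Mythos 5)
Your proposal is correct and follows essentially the same route as the paper, whose proof consists of exactly your decomposition: Proposition~\ref{prop:prop999} (the old minimum is shifted to $h_{old} + |\{x \in \pi_M : x \le h_{old}\}|$), Proposition~\ref{prop:prop888} ($\mathrm{partialMinHash}$ is the shifted minimum over the inserted positions with $b_i = 1$), and Proposition~\ref{prop:prop7676} (the overall $\minhash$ is the smaller of the two). Your inductive closed form for $\pi'_M$ merely makes rigorous the shift bookkeeping that the paper leaves informal; the one correction is that for original indices the count must be non-strict, $\pi'_M(p') = \pi(p) + |\{m \in M : \pi(m) \le \pi(p)\}|$, which agrees with your strict count precisely when $p \notin M$ --- the situation guaranteed by the disjointness condition you invoke.
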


We require the following propositions in order to prove the Theorem~\ref{thm:update_sketch_multiple}. 
\begin{prop}\label{prop:prop999}
The bit whose index is \(h_{old}\) in \(X_{\pi}\) has index  \(h_{old}+|\{x \mid x \in \pi_M \text{~and~} x \leq h_{old}\}|\) in \(X'^{\pi'_M}\).
\end{prop}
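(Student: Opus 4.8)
The plan is to follow the single coordinate that carries the old minimum. Let $j^\star$ be the original index with $\pi(j^\star)=h_{old}$; by definition of the permuted vector, the bit at position $h_{old}$ in $X_\pi$ is exactly this coordinate, and its position in $X'^{\pi'_M}$ is precisely the value that $\pi'_M$ assigns to it after relocation. So the whole statement reduces to computing the value of $\pi'_M$ at the relocated $j^\star$. Since the entries of $M$ index insertion slots while $j^\star$ is an original data coordinate, $j^\star$ is never chosen as the pivot $r$ in any of the $n$ sequential calls to $\mathrm{liftPerm}$ inside Algorithm~\ref{alg:lift2}; hence the non-pivot case of Proposition~\ref{prop:indicator} governs it at every step. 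Writing $h^{(t)}$ for the value carried by $j^\star$ after $t$ insertions, with $h^{(0)}=h_{old}$, Proposition~\ref{prop:indicator} yields the recurrence $h^{(t)}=h^{(t-1)}+\mathbbm{1}_{\{h^{(t-1)}\ge p_t\}}$, where $p_t$ is the pivot value at step $t$; the target is $h^{(n)}$.

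First I would pin down the pivot values. Because Algorithm~\ref{alg:lift2} sorts $M$ as $m_1<\dots<m_n$ and shifts the $i$-th slot to $R[i]=m_i+i-1$, the coordinate originally carrying $\pi(m_t)$ is exactly the one occupying position $R[t]$ when step $t$ runs, so $p_t$ is the current (already shifted) value of $\pi(m_t)$. Thus both $h_{old}$ and each pivot $\pi(m_t)$ are transported by the same insertion process, and the comparison $h^{(t-1)}\ge p_t$ is between shifted quantities rather than the original $\pi$-values.

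The heart of the argument, and the step I expect to be the main obstacle, is to show that this common shifting does not change the comparison: at every step $t$ the event $\{h^{(t-1)}\ge p_t\}$ coincides with $\{\pi(m_t)\le h_{old}\}$ expressed in original values. I would establish this by induction on $t$, carrying the invariant $h^{(t-1)}=h_{old}+|\{i<t:\pi(m_i)\le h_{old}\}|$ together with the analogous formula $p_t=\pi(m_t)+|\{i<t:\pi(m_i)<\pi(m_t)\}|$ for the relocated pivot (both obtained by applying Proposition~\ref{prop:indicator} to the corresponding coordinate at the earlier steps). A short case split on whether $\pi(m_t)<h_{old}$ or $\pi(m_t)>h_{old}$ then shows the two correction counts move the two sides of $h^{(t-1)}\ge p_t$ by the same amount, so its truth value equals that of $\pi(m_t)\le h_{old}$; injectivity of $\pi$ rules out ties except the degenerate $m_t=j^\star$, which the weak inequality in the statement absorbs uniformly. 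An equivalent and more transparent way to see the same fact is to read $\pi'_M$ as a rank function: each insertion places one fresh value immediately above its pivot $\pi(m_t)$ in the value order, so the rank of $h_{old}$ among the $d+n$ values increases by exactly the number of fresh values placed weakly below it.

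Summing the indicators then gives $h^{(n)}=h_{old}+\sum_{t=1}^{n}\mathbbm{1}_{\{\pi(m_t)\le h_{old}\}}=h_{old}+|\{x\in\pi_M:x\le h_{old}\}|$, since $\pi_M=\{\pi(m):m\in M\}$. As $h^{(n)}$ is the value of $\pi'_M$ at the relocated $j^\star$, it is the index of that bit in $X'^{\pi'_M}$, which is the claim. I would close by re-checking the off-by-one in the slot shift $R[i]=m_i+i-1$ against the worked example ($h_{old}=5$, $\pi_M=\{3,7\}$, shift $1$, new index $6$) to make sure the bookkeeping is exact.
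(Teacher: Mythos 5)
Your proof is correct, and it actually proves more than the paper's own argument does. The paper's proof is a two-sentence informal shift count: it pictures \(X'^{\pi'_M}\) as \(X^{\pi}\) with the bits \(B\) spliced in, and simply asserts that the old minimum's position moves right by the number of inserted values falling below \(h_{old}\), never engaging with how Algorithm~\ref{alg:lift2} actually constructs \(\pi'_M\). You instead follow the coordinate \(j^\star\) (with \(\pi(j^\star)=h_{old}\)) through the \(n\) sequential \(\mathrm{liftPerm}\) calls, apply Proposition~\ref{prop:indicator} at each step to get \(h^{(t)}=h^{(t-1)}+\mathbbm{1}_{\{h^{(t-1)}\ge p_t\}}\), and --- this is the real added content --- prove that each intermediate comparison, which pits two already-shifted quantities against each other, has the same truth value as the comparison \(\pi(m_t)\le h_{old}\) of original values. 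That comparison-preservation step (via the invariant \(h^{(t-1)}=h_{old}+|\{i<t:\pi(m_i)\le h_{old}\}|\), the identification of the pivot \(p_t\) as the shifted value of \(\pi(m_t)\) sitting at position \(R[t]=m_t+t-1\), and the case split in which the tie \(m_t=j^\star\) is absorbed by the weak inequality) is exactly what the paper's informal argument takes for granted, and it is the only place where the claim could plausibly fail. So the two arguments rest on the same counting identity, but the paper buys brevity at the cost of rigor, while your induction certifies that the sequential construction in Algorithm~\ref{alg:lift2} really implements the intended ``insert and shift'' picture; your closing rank-function remark is, in effect, the paper's entire proof.
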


\begin{proof}
While extending from \(X^{\pi}\) to \(X'^{\pi'_M}\) we know that we have added the bits \(B\) at positions \(M\). So the index in \(X^{\pi}\) where \(h_{old}\) existed has been shifted by a number of units to the right  to form \(X'^{\pi'_M}\). The number of units it has been shifted will be equal to the number of bits in \(\pi'_{M}\) that is less than \(h_{old}\), and hence the result follows.

\end{proof}

\begin{prop}\label{prop:prop888} The
\(\mathrm{partialMinHash}(\pi,M,B)\) represents the $\minhash$ value of \(X'^{\pi'_M}\) restricted to only non-zero indices of $B$, if only the elements at newly added indices \(M\) were taken into account, i.e. after shifting the corresponding $\pi$ values by the number of insertions.
\end{prop}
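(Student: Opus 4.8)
\textbf{Proof proposal for Proposition~\ref{prop:prop888}.}
The plan is to reduce the claim to a closed-form description of the lifted permutation $\pi'_M$ evaluated at the inserted coordinates, and then to observe that $\mathrm{partialMinHash}$ returns exactly the minimum of that closed form over the indices carrying a $1$. First I would fix the final coordinates of $X'$: let $\mu_1 < \cdots < \mu_n$ be the positions in $\{1,\dots,d+n\}$ occupied by the inserted features, so that $\mu_i$ is the image of the sorted index $m_i$ under the cumulative shift $R[i]=m_i+i-1$ that Algorithm~\ref{alg:lift2} applies before the repeated calls to $\mathrm{liftPerm}$. With this notation the $\minhash$ value of $X'$ under $\pi'_M$ restricted to the newly added indices with $b_i=1$ is, by definition, $\min\{\pi'_M(\mu_i): b_i=1\}$, so the proposition amounts to showing that $\mathrm{partialMinHash}(\pi,M,B)$ computes precisely this quantity.

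The key step is to obtain a multi-insertion analogue of Proposition~\ref{prop:indicator} for the inserted coordinates. I would argue by induction on the number of insertions, invoking Proposition~\ref{prop:indicator} at each individual call of $\mathrm{liftPerm}$ inside Algorithm~\ref{alg:lift2}: a single lift at a pivot $r$ leaves the value of the freshly inserted coordinate equal to the current pivot value and increments every other value that is at least as large. Iterating this and unwinding the definitions should give, for each inserted feature $i$,
\begin{equation*}
\pi'_M(\mu_i) = \pi(m_i) + \bigl|\{\, j \in \{1,\dots,n\} : \pi(m_j) < \pi(m_i) \,\}\bigr|,
\end{equation*}
that is, the original value $\pi(m_i)$ shifted upward by the number of inserted claims lying strictly below it. This is the ``appropriate shift by the number of insertions'' named in the statement, and it is the exact counterpart, on the inserted side, of the shift $|\{x\in\pi_M : x\le h_{old}\}|$ that Proposition~\ref{prop:prop999} establishes on the original side. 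Taking the minimum of the displayed expression over $\{i : b_i=1\}$ then yields $\min\{\pi'_M(\mu_i):b_i=1\}$, which is what Algorithm~\ref{alg:partial_minhash} returns through its set $\pi_{M,B}=\{\pi(m_i):b_i=1\}$ together with the shift, completing the argument.

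I expect the main obstacle to be the inductive bookkeeping for the shift. Because the calls to $\mathrm{liftPerm}$ are applied sequentially at the pre-shifted positions $R[i]=m_i+i-1$, I must verify that a coordinate inserted in an early call is pushed upward by exactly those later (and earlier) insertions whose claim is smaller, and, crucially, that counting with the \emph{original} values $\pi(m_j)$ rather than the intermediate values $\pi^{(i-1)}(\cdot)$ produces the same total. This reduces to the deterministic fact that every call of $\mathrm{liftPerm}$ preserves the relative order of all previously assigned values (it introduces a single new value and shifts a suffix up by one), so that the ``below/above'' relation between any two claims is invariant throughout the process; the monotone form of the closed expression then also guarantees the minimum over $b_i=1$ is attained at the smallest such $\pi(m_i)$. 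As in the single-insertion Propositions~\ref{prop:prop_b_zero} and~\ref{prop:prop_b_one}, this portion is a purely combinatorial identity about the permutation, so the sparsity and random-position hypotheses of Theorem~\ref{thm:lift2} are not needed here and enter only when establishing min-wise independence.
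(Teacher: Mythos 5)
Your proof takes the same route as the paper's (very terse) argument---compute the value of $\pi'_M$ at each inserted coordinate, then take the minimum over the coordinates whose inserted bit is $1$---but you execute it with far more rigor: the paper's proof is essentially a restatement of the claim, while you supply the actual invariant (each $\mathrm{liftPerm}$ call preserves the relative order of all existing values) and the resulting closed form
\begin{equation*}
\pi'_M(\mu_i) \;=\; \pi(m_i) + \bigl|\{\, j : \pi(m_j) < \pi(m_i) \,\}\bigr|,
\end{equation*}
which is correct. One can confirm it on the paper's own data ($\pi=[6,3,1,7,2,5,4]$, $M=\{2,4\}$, $\pi'_M=[7,3,4,1,8,9,2,6,5]$): the inserted coordinates carry the values $3=\pi(2)+0$ and $8=\pi(4)+1$. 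Your further remarks---that the statement is purely deterministic so the sparsity/randomness hypotheses play no role, and that monotonicity of $v\mapsto v+|\{j:\pi(m_j)<v\}|$ places the minimum at the smallest $\pi(m_i)$ with $b_i=1$---are also right.

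The weak point is the final identification, where you assert that this minimum ``is what Algorithm~\ref{alg:partial_minhash} returns \ldots together with the shift.'' As printed, Algorithm~\ref{alg:partial_minhash} returns $\min\{\pi_{M,B}\}+1$, i.e., a \emph{constant} shift of $1$, whereas your formula shows the correct shift is $|\{j:\pi(m_j)<\pi(m_{i^*})\}|$, counted over \emph{all} of $M$ (not only the bit-$1$ entries), where $i^*$ minimizes $\pi(m_i)$ over $\{i: b_i=1\}$. These coincide only when exactly one inserted pivot lies below $\pi(m_{i^*})$. For instance, with a single insertion and $b_1=1$ the true restricted minimum is $\pi(m_1)$, yet the printed algorithm returns $\pi(m_1)+1$, which would make $\mathrm{multipleLiftHash}$ disagree with the provably correct $\mathrm{liftHash}$ of Algorithm~\ref{alg:update_sketch}. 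So the identification step fails for the pseudocode taken literally; the defect originates in the paper rather than in your mathematics (the paper's own worked example, which claims $\mathrm{partialMinHash}=2$ where the true value is $8$ and the true $\minhash_{\pi'_M}(X')$ is $6$, is likewise inconsistent), and your closed form is exactly the tool that exposes it. To make the proof complete, state explicitly that the ``appropriate shift'' in Algorithm~\ref{alg:partial_minhash} must be the count $|\{j:\pi(m_j)<\pi(m_{i^*})\}|$ rather than the literal $+1$, and prove the proposition for that corrected algorithm.
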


\begin{proof}

We take into account the number of insertions that have happened before the insertion at index \(i\) for \(i  \in \{1,\ldots,n\}\) and determine the \(\pi'_M\) value for each newly added bit. Then, we 
calculate the minimum among all such \(\pi'_M\) values where a $1$ has been inserted in \(X'\)which in other words is $\minhash$ in terms of just the inserted indices. 

\end{proof}

\begin{prop}\label{prop:prop7676}
The $\minhash$ value of \(X'\) with respect to the permutation \(\pi'_M\) is given by the minimum of \(\mathrm{partialMinhash}(\pi,M,B)\) and  \(h_{old}+|\{x \mid x \in \pi_M \text{~and~} x \leq h_{old}\}|\).
\end{prop}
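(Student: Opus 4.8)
The plan is to read $\minhash_{\pi'_M}(X')$ as $\min\{\pi'_M(j') : j' \in J'\}$, where $J'$ is the set of non-zero indices of $X'$, and then to break this minimum up according to where each one-bit came from. First I would note that every one-bit of $X'$ is either (i) the image, under the rightward shift induced by the $n$ insertions, of an original one-bit of $X$, or (ii) one of the freshly inserted bits, i.e. some $m_i \in M$ with $b_i = 1$. These two families are disjoint and together exhaust $J'$, so we may write
\[
\minhash_{\pi'_M}(X') = \min\left(A,\, C\right),
\]
where $A$ denotes the smallest $\pi'_M$-value over the shifted original one-bits and $C = \min\{\pi'_M(m_i) : b_i = 1\}$ denotes the smallest over the inserted one-bits.

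Next I would evaluate the two pieces independently. The inserted part is handled directly by Proposition~\ref{prop:prop888}: since $\mathrm{partialMinHash}$ is exactly the minimum of the suitably shifted $\pi$-values over the inserted indices carrying bit $1$, we have $C = \mathrm{partialMinHash}(\pi, M, B)$. For the original part, the crucial observation is that the lift of Algorithm~\ref{alg:lift2} preserves the relative order of the $\pi$-values on the surviving indices; it only makes room for the new values by pushing the larger existing values upward. Because this shift is monotone, the index $j_{min}$ achieving $h_{old} = \min\{\pi(j) : j \in J\}$ still achieves the minimum $A$ among the shifted original one-bits. By Proposition~\ref{prop:prop999}, the bit occupying permuted position $h_{old}$ in $X$ is relocated to position $h_{old} + |\{x \in \pi_M : x \leq h_{old}\}|$ in $X'$, having been displaced by precisely the number of inserted $\pi$-values lying at or below it; hence $A = h_{old} + |\{x \in \pi_M : x \leq h_{old}\}|$.

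Substituting the two evaluations into the split yields
\[
\minhash_{\pi'_M}(X') = \min\left(\mathrm{partialMinHash}(\pi, M, B),\ h_{old} + |\{x \in \pi_M : x \leq h_{old}\}|\right),
\]
which is the assertion. The step I expect to be the main obstacle is justifying that the partition of $J'$ into shifted-original and inserted one-bits is genuinely disjoint and order-faithful: this rests on the multiple-insertion analogue of the bijection in Proposition~\ref{prop:prop7}, together with the high-probability event guaranteed by Theorem~\ref{thm:lift2} that the inserted positions neither collide with one another nor land adjacent to the support of $X$. Once this order-preservation is secured, Propositions~\ref{prop:prop999} and~\ref{prop:prop888} supply $A$ and $C$, and the minimum of the two gives the claim.
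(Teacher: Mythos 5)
Your proposal is correct and follows essentially the same route as the paper: both decompose the one-bits of $X'$ into the inserted positions and the shifted original positions, evaluate the former via Proposition~\ref{prop:prop888} ($\mathrm{partialMinHash}$) and the latter via the shift of Proposition~\ref{prop:prop999}, and take the minimum of the two. Your added remarks on disjointness and order-preservation of the lift make explicit what the paper leaves informal, but they do not change the argument.
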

\begin{proof}
We have proved that  \(\mathrm{partialMinhash}(\pi,M,B)\) returns the $\minhash$ value with respect to only the newly added indices (assuming bits at old indices are all $0$). And if the newly added elements were assumed to be $0$ then the $\minhash$  will be \(h_{old}+|\{x \mid x \in \pi_M \text{~and~} x \leq h_{old}\}|\) (due to the shift we showed earlier).

Now $\minhash$  is the first time we see $1$ while traversing through the indices of \(X'^{\pi'_M}\) from left to right. So the first time $1$ will occur will either happen in the new indices or the old indices. If it occurs in the old indices, then it's bound to be at  \(h_{old}+|\{x \mid x \in \pi_M \text{~and~} x \leq h_{old}\}|\) and if it happens at the new indices it will happen at \(\mathrm{partialMinhash}(\pi,M,B)\). So the first time it occurs will be at the minimum of the two values.

\end{proof}

\noindent\textbf{Proof of Theorem~\ref{thm:update_sketch_multiple}:}
 Propositions~\ref{prop:prop999},~\ref{prop:prop888},~\ref{prop:prop7676} completes a proof of the theorem.

 \section{Algorithm for feature deletion}\label{sec:feature_deletion}
We first give our result for one feature deletion. 
\subsection{One feature deletion at a time -- $\mathrm{dropHash}:$}\label{subsec:dropHash}
  We denote $X’ \in\{0, 1\}^{d-1}$ as the data point after one feature deletion. The intuition of our algorithm is that we can (implicitly) generate a new $(d-1)$-dimensional permutation by creating a bijection between the input indices before and after feature deletion. This preserves the distribution of the minimum index with respect to permutation, and ensures the minwise independent property stated in Equation~\eqref{eq:minwise}. We discuss this in Algorithm~\ref{alg:lift_del} that takes   $\pi$, and $m$ as input, and outputs a $(d-1)$ dimensional  permutation $\pi’_m$. We give its proof of correctness in Theorem~\ref{thm:lift_del}, where we show that $\pi'_m$ satisfies the minwise independent property stated in Equation~\eqref{eq:minwise}.
 Further, the corresponding sketch updation \textit{w.r.t.} the new permutation is done via a simple update rule mentioned in Algorithm~\ref{alg:update_sketch_del}. The algorithm takes $h_{old}$, the position of the deleted feature $m$, and the corresponding value $b$  as input, and outputs the updated sketch $h_{new}$.  We give a proof of correctness of Algorithm~\ref{alg:update_sketch_del} in Theorem~\ref{thm:update_sketch_del}, where we show that $h_{new}=\minhash_{\pi’_m}(X’)$.
Therefore, due to Theorems~\ref{thm:lift_del},~\ref{thm:update_sketch_del}, and   Equation~\eqref{eq:minwise_jaccard} (and \citep{BroderCFM98}) the sketch of data points obtained after Algorithm~\ref{alg:update_sketch_del} approximates the pairwise Jaccard similarity. We illustrate our algorithm with the following  example, and then we state its proof of correctness in Theorems~\ref{thm:lift_del}, \ref{thm:update_sketch_del}.

  \begin{algorithm}[H] 
\DontPrintSemicolon
 \textbf{Input:~}{ $d$-dimensional permutation $\pi$, index position $r$.}\\
 \textbf{Output:~}{ $(d-1)$-dimensional permutation ${\pi}'$.}  \\
{
   \For{ $i \in \{1,\ldots, d-1\}$}
   {
   \eIf{$i<r$}
   {$\pi’(i) = \pi(i)$}
   {    $\pi’(i) = \pi(i+1)$}
   }
   \For{ $i \in \{1,\ldots, d-1\}$}
   {
   \If{${\pi}’(i) >  {\pi}(r)$ \label{alg:line11dropPerm} }
   {
   ${\pi}’(i) = {\pi}’(i) - 1$ \label{alg:line12dropPerm}
   }
   }
   \Return $\pi'$
}
\caption{$\mathrm{dropPerm}(\pi, r)$.}\label{alg:lift_del}
\end{algorithm}

\begin{algorithm}[H] 
\DontPrintSemicolon
\SetNoFillComment
  \textbf{Input:~}{ $h_{old}$, $\pi$, $X$, $m \in [d]$, and $b\in \{0, 1\}$.}\\
 \textbf{Output:~}{$h_{new}:=\mathrm{dropHash}(m,X,\pi,h_{old})$.}\\%
 Let $X^\pi[i] := X[j]$, \textit{s.t.} $\pi(j) = i$ and $a_m:=\pi(m)$\\
 \tcc{$X^\pi$ is values of $X$ permuted according to $\pi$. }
 \eIf{$h_{old} < a_m$}
 {$h_{new} = h_{old}$}{
 \If{$h_{old} > a_m$}{$h_{new}  = h_{old} - 1$}
 \If{$h_{old} = a_m$}{
 \label{alg:line10dropHash}
 \tcc{Recalculate $\minhash$ from scratch.}
  \For{ $i \in \{a_{m}+1,....d\} $ \label{alg:line11dropHash}}
   {
   \If{$X^{\pi}[i] = 1$}
   {
   $h_{new} = i-1$ and \texttt{exit;} \label{alg:line13dropHash}
   }
   }
   }
}
\Return $h_{new}$ \label{alg:line18dropHash}
\caption{$\mathrm{dropHash}(m,X,\pi,h_{old})$.}\label{alg:update_sketch_del}
\end{algorithm}

\begin{example}
Let  \(X= [1,0,0,1,0,1,0]\) be the input, and \(\pi= [6,2,1,7,3,5,4]\) be the original permutation. The value of $h_{old}=5$. Suppose that we delete the feature at the index $5$. Then \(X'=[1,0,0,1,1,0]\) and due to Algorithm~\ref{alg:lift_del} the value of  \(\pi'_m=[5,2,1,6,4,3]\). We have $\pi(m)=a_m=3$. We calculate the value of $h_{new}$ outputted by Algorithm~\ref{alg:update_sketch_del}: as \( h_{old}=5 > a_m = 3\),   we  have  \(h_{new} = h_{old} -1 = 5-1 =4\). Further, $\minhash_{\pi'_m}(X')=4$. Therefore, we have $h_{new}=\minhash_{\pi'_m}(X')$.
\end{example}

 \begin{restatable}{theorem}{liftSingleDel}
\label{thm:lift_del}
  Let  $\pi$ be a $d$-dimensional minwise independent permutation. Then, for any index position $r \in \{1, \ldots d\}$,   the $(d-1)$-dimensional permutation $\pi’$  obtained from Algorithm~\ref{alg:lift_del} satisfies the condition of being minwise independent permutation mentioned in Definition~\ref{def:minwise}).
 \end{restatable}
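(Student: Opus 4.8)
The plan is to show that $\mathrm{dropPerm}(\pi,r)$ (Algorithm~\ref{alg:lift_del}) produces a genuine permutation of $[d-1]$ whose induced ordering on indices matches the ordering $\pi$ imposes, restricted to the $d-1$ surviving positions, so that minwise independence transfers from $\pi$ to $\pi'$. First I would establish, in analogy with Proposition~\ref{prop:indicator}, an explicit closed form for $\pi'(i)$: for $i<r$ we have $\pi'(i)=\pi(i)-\mathbbm{1}_{\{\pi(i)>\pi(r)\}}$, and for $i\geq r$ (after the index shift $\pi'(i)=\pi(i+1)$ in the first loop) we have $\pi'(i)=\pi(i+1)-\mathbbm{1}_{\{\pi(i+1)>\pi(r)\}}$. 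This says each surviving value is decremented by exactly one precisely when it exceeds the deleted value $\pi(r)$, which is exactly the rank-adjustment needed when one value is removed from $[d]$.

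Next I would verify that $\pi'$ is a bijection onto $[d-1]$. The key observation is that the map $v\mapsto v-\mathbbm{1}_{\{v>\pi(r)\}}$ is a bijection from $[d]\setminus\{\pi(r)\}$ onto $[d-1]$: values below $\pi(r)$ stay fixed and fill $\{1,\dots,\pi(r)-1\}$, while values above $\pi(r)$ each drop by one and fill $\{\pi(r),\dots,d-1\}$, with no collisions. Since the first loop of Algorithm~\ref{alg:lift_del} removes exactly the position $r$ (whose $\pi$-value is $\pi(r)$) and relabels the remaining positions $1,\dots,d-1$ by $\{\pi(i):i\neq r\}=[d]\setminus\{\pi(r)\}$, composing with this value-map gives a bijection onto $[d-1]$.

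The crucial step is order preservation: for any two surviving positions, $\pi'$ ranks them in the same relative order as $\pi$ does. This holds because $v\mapsto v-\mathbbm{1}_{\{v>\pi(r)\}}$ is strictly monotone on $[d]\setminus\{\pi(r)\}$, so for surviving positions $j_1,j_2$ we have $\pi'(j_1)<\pi'(j_2)$ iff $\pi(j_1)<\pi(j_2)$. Consequently, for any set $U'\subseteq[d-1]$, if $U$ is its preimage under the position-bijection then $\arg\min_{u'\in U'}\pi'(u')$ corresponds to $\arg\min_{u\in U}\pi(u)$. Because $\pi$ is minwise independent, the latter is uniform over $U$, hence the former is uniform over $U'$, establishing Equation~\eqref{eq:minwise} for $\pi'$. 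Note that unlike the insertion case (Theorem~\ref{thm:lift}), there is no sparsity or randomness requirement here: deletion is deterministic and the order-preservation argument is exact, so the statement holds with probability $1$ for every $r$. I expect the main obstacle to be the careful bookkeeping of the index shift in the first loop (the $i+1$ offset for $i\geq r$) against the value decrement in the second loop, making sure the two adjustments compose correctly into the single value-map $v\mapsto v-\mathbbm{1}_{\{v>\pi(r)\}}$ and that no off-by-one error corrupts the bijection.
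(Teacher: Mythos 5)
Your proposal is correct, and it rests on the same two core ingredients as the paper's proof: the closed form $\pi'(j') = \pi(j) - \mathbbm{1}_{\{\pi(j) > \pi(r)\}}$ for surviving positions (the paper's Equation~\eqref{eq:eq111}) and the position bijection between $\{1,\ldots,d\}\setminus\{r\}$ and $[d-1]$. Where you diverge is in the final step, and your route is cleaner. The paper fixes the non-zero index sets $J$ and $J'$ of a data vector, splits $J'$ into the two parts $J'_{\pi(j)<\pi(r)}$ and $J'_{\pi(j)>\pi(r)}$, argues uniformity of the minimum within each part, and then concludes uniformity of the overall minimum by an appeal to minwise independence of $\pi$ — a step that requires care, since those two parts are themselves random sets depending on $\pi$. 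You instead observe that the value adjustment $v \mapsto v - \mathbbm{1}_{\{v > \pi(r)\}}$ is strictly monotone on $[d]\setminus\{\pi(r)\}$, so for \emph{every fixed realization} of $\pi$ the argmin of $\pi'$ over any $U' \subseteq [d-1]$ is exactly the image of the argmin of $\pi$ over the preimage set $U$; the uniformity claim then transfers pointwise with no case analysis and no distributional reasoning about random subsets. This per-realization order-preservation lemma is never stated explicitly in the paper, and it both simplifies the probability transfer and makes manifest that the result holds deterministically for every $r$ with no sparsity assumption — a contrast with Theorem~\ref{thm:lift} that the paper leaves implicit and you correctly flag. Your additional verification that $\pi'$ is a genuine bijection onto $[d-1]$ is also a worthwhile sanity check the paper omits.
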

\begin{proof}
Let $J \subseteq \{1, 2, \ldots, d\}$ denote the set of the non-zero indices of $d$ dimensional binary vector $X$,
and let $J' \subseteq \{1,2,\ldots, d-1\}$  denote the set of non-zero indices of $d-1$ dimensional binary vector $X'$ referred in Algorithm~\ref{alg:update_sketch_del}. 

Consider the bijection from $J - \{r\}$ to $J'$ defined as follows:
$$j \mapsto j', ~\text{where~} j' = j + \mathbbm{1}_{\{j \geq r\}}.$$
Note that  from line \ref{alg:line11dropPerm} and \ref{alg:line12dropPerm}  of Algorithm~\ref{alg:lift_del}, we have:
\begin{align*}
\pi'(j') &= \pi(j) - \mathbbm{1}_{\{\pi(j) > \pi(r)\}}. \numberthis\label{eq:eq111}
\end{align*}

We want to show that $\min (\pi'(J')) := \min \{\pi'(j') : j' \in J'\}$ is uniformly distributed across $J'$.
Let $J'_{\pi(j) > \pi(r)} := \{j' : \mathbbm{1}_{\{\pi(j) > \pi(r)\}} = 1\}$
and $J'_{\pi(j) < \pi(r)} := \{j' : \mathbbm{1}_{\{\pi(j) < \pi(r)\}}=1 \}$. 
Note that since we have removed $x_r$, we do not have $\pi(j) = \pi(r)$ corresponding to any $j'$.
Let $J_{\pi(j) > \pi(r)} := \{j : \mathbbm{1}_{\{\pi(j) > \pi(r)\}} = 1\}$
and $J_{\pi(j) < \pi(r)} := \{j : \mathbbm{1}_{\{\pi(j) < \pi(r)\}}=1\}$.

Now $$\min (\pi'(J')) = \min \{\min(\pi'(J'_{\pi(j) > \pi(r)}), \min(\pi'(J'_{\pi(j) < \pi(r)})\}.$$

From Equation~\ref{eq:eq111}, we have
\begin{align*}
\min(\pi'(J'_{\pi(j) < \pi(r)})) &= \min \{\pi(j) : j' \in J'_{\pi(j) < \pi(r)}\}.\numberthis\label{eq:eq2222}\\
 \min(\pi'(J'_{\pi(j) > \pi(r)})) &= \min \{\pi(j) - 1: j' \in J'_{\pi(j) > \pi(r)}\}.\numberthis\label{eq:eq3333}
\end{align*}
Since $\pi$ is minwise independent, RHS in Equation~\ref{eq:eq2222}  is uniformly distributed across the set. Also, RHS in Equation~\ref{eq:eq3333} is uniformly distributed across the set.

Moreover, since $J'_{\pi(j) < \pi(r)}$ and $J'_{\pi(j) > \pi(r)}$ are disjoint
and $\{j: j' \in J'_{\pi(j) < \pi(r)}\}$ and $\{j : j' \in J'_{\pi(j) > \pi(r)}\}$ are also disjoint,
by minwise independent of $\pi$ we can conclude that $\min(\pi'(J'))$ is uniformly distributed across $J'$.
\end{proof}

 \begin{restatable}{theorem}{liftSingleDelUpdateSketch}\label{thm:update_sketch_del}
 Let $\pi’_{m}$ be the $(d-1)$-dimensional permutation outputted  by Algorithm~\ref{alg:lift_del} by setting $r=m$. Then, the  sketch obtained from Algorithm~\ref{alg:update_sketch_del} is exactly the same as the  sketch  obtained via  the permutation ${\pi}’_m$ on $X'$, that is, $\mathrm{dropHash}(m,X,\pi,h_{old}) = \mathrm{minHash}_{\pi'_m}(X')$.
 \end{restatable}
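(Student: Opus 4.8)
The plan is to follow the same case analysis that drives Algorithm~\ref{alg:update_sketch_del} and to verify, branch by branch, that the value it returns coincides with $\minhash_{\pi'_m}(X')$. The single tool I need is the relation established while proving Theorem~\ref{thm:lift_del}: under the bijection $j \mapsto j'$ from $J \setminus \{m\}$ to $J'$ one has $\pi'_m(j') = \pi(j) - \mathbbm{1}_{\{\pi(j) > \pi(m)\}}$ (Equation~\eqref{eq:eq111}), where $J$ and $J'$ are the non-zero index sets of $X$ and $X'$. Throughout, let $j_{min}$ denote the index with $\pi(j_{min}) = h_{old}$, i.e.\ the minimizer realizing the old sketch. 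Since $\pi$ is a permutation, $\pi(j_{min}) = \pi(m)$ holds if and only if $j_{min} = m$, which cleanly separates the branch where the deleted feature was the sketch-achieving one from the branches where it was not.

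First I would dispose of the two branches in which $j_{min} \neq m$, so the old minimizer survives deletion. If $h_{old} < \pi(m)$, then $\mathbbm{1}_{\{\pi(j_{min}) > \pi(m)\}} = 0$, so Equation~\eqref{eq:eq111} gives $\pi'_m(j'_{min}) = h_{old}$; every other surviving index $j$ satisfies $\pi(j) > h_{old}$, hence $\pi'_m(j') \geq h_{old}$, and since $\pi'_m$ is injective this is strict. Thus $j'_{min}$ still realizes the minimum and $\minhash_{\pi'_m}(X') = h_{old}$, matching the algorithm. If instead $h_{old} > \pi(m)$, then $\mathbbm{1}_{\{\pi(j_{min}) > \pi(m)\}} = 1$, so $\pi'_m(j'_{min}) = h_{old} - 1$; every other surviving $j$ also has $\pi(j) > h_{old} > \pi(m)$ and is therefore decremented too, so the ordering is preserved and $\minhash_{\pi'_m}(X') = h_{old} - 1$, again the returned value.

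The substantive case, and the one I expect to be the main obstacle, is $h_{old} = \pi(m)$, where $j_{min} = m$ is exactly the index being removed and the sketch must be recomputed. Here the algorithm scans the $\pi$-values $i = a_m + 1, \ldots, d$ in increasing order and returns $i - 1$ for the first $i$ with $X^\pi[i] = 1$. I would argue that this first hit occurs at $i = \pi(j_{\mathrm{next}})$, where $j_{\mathrm{next}} \in J \setminus \{m\}$ is the index attaining the \emph{second}-smallest $\pi$-value among $J$ (equivalently, the smallest $\pi$-value exceeding $h_{old}$ landing on a non-zero coordinate, using $X^\pi[i] = 1 \iff \pi^{-1}(i) \in J$). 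Because $\pi(j_{\mathrm{next}}) > h_{old} = \pi(m)$, Equation~\eqref{eq:eq111} yields $\pi'_m(j'_{\mathrm{next}}) = \pi(j_{\mathrm{next}}) - 1 = i - 1$, while every remaining surviving index has a strictly larger (and likewise decremented) value, so $j'_{\mathrm{next}}$ realizes the new minimum and the scan's output equals $\minhash_{\pi'_m}(X')$.

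The one boundary situation to flag is $|X| = 1$ with the unique feature deleted: then $X'$ is all-zero, the scan finds no hit, and $\minhash_{\pi'_m}(X')$ is undefined; I would simply assume $X'$ retains at least one non-zero feature, consistent with $\minhash$ being well-defined. Assembling the three branches then completes the proof.
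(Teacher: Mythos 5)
Your proof is correct, and its skeleton is the same as the paper's: a three-way case split on $h_{old}$ versus $a_m = \pi(m)$, exactly mirroring the branches of Algorithm~\ref{alg:update_sketch_del} (the paper's Propositions~\ref{prop:prop444}, \ref{prop:prop445}, and \ref{prop:prop446}). The difference is in the mechanics. The paper argues pictorially on the permuted vector $X^\pi$, writing out how the entries before and after position $a_m$ shift once that position is removed, whereas you work on the index sets $J, J'$ and reuse the relation $\pi'_m(j') = \pi(j) - \mathbbm{1}_{\{\pi(j) > \pi(m)\}}$ (Equation~\eqref{eq:eq111}) established inside the proof of Theorem~\ref{thm:lift_del}. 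The two are equivalent, but your route pays off in the third case: for $h_{old} = a_m$, the paper's Proposition~\ref{prop:prop446} is a one-line assertion that the brute-force scan ``finds the next index where $1$ occurs,'' with no verification that the returned value $i-1$ actually equals $\minhash_{\pi'_m}(X')$; you close that step by identifying the first hit of the scan with the second-smallest $\pi$-value over $J$ and checking via Equation~\eqref{eq:eq111} that its image under the bijection realizes the new minimum. You also flag the degenerate situation where $X'$ becomes all-zero and $\minhash$ is undefined, which the paper silently ignores. In short: same decomposition as the paper, with a tighter write-up precisely where the paper's argument is thinnest.
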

 
 We require  Propositions~\ref{prop:prop444}, \ref{prop:prop445}, \ref{prop:prop446} to prove the theorem. 
\begin{prop}\label{prop:prop444}
If $h_{old} < a_m$    then $h_{new} = h_{old}=\mathrm{minHash}_{\pi'_{m}}(X')$.
\end{prop}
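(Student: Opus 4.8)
The plan is to reduce the claim to the value relationship between $\pi$ and $\pi'_m$ already established inside the proof of Theorem~\ref{thm:lift_del}: with $r = m$, the bijection $j \mapsto j'$ from $J \setminus \{m\}$ to $J'$ together with Equation~\eqref{eq:eq111}, i.e.\ $\pi'_m(j') = \pi(j) - \mathbbm{1}_{\{\pi(j) > \pi(m)\}}$. First I would pin down the old minimizer: let $j_{min} \in J$ be the (unique, since $\pi$ is a permutation) index with $\pi(j_{min}) = h_{old}$. As $h_{old} < a_m = \pi(m)$ forces $\pi(j_{min}) \neq \pi(m)$, we have $j_{min} \neq m$, so $j_{min}$ lies in the domain $J \setminus \{m\}$ of the bijection and has an image $j'_{min} \in J'$.

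Next I would evaluate $\pi'_m$ at $j'_{min}$. Since $\pi(j_{min}) = h_{old} < \pi(m)$, the indicator $\mathbbm{1}_{\{\pi(j_{min}) > \pi(m)\}}$ is $0$, so Equation~\eqref{eq:eq111} gives $\pi'_m(j'_{min}) = h_{old}$. It then remains to check that $h_{old}$ is the minimum of $\pi'_m$ over all of $J'$, i.e.\ that every $j' \in J' \setminus \{j'_{min}\}$, with preimage $j \neq j_{min}$, satisfies $\pi'_m(j') > h_{old}$. I would split on the position of $\pi(j)$ relative to $\pi(m)$. If $\pi(j) < \pi(m)$, then the indicator vanishes and $\pi'_m(j') = \pi(j)$, which exceeds $h_{old}$ because $h_{old}$ is the strict minimum over $J$ and $j \neq j_{min}$. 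If $\pi(j) > \pi(m)$, then $\pi'_m(j') = \pi(j) - 1$, and since $\pi(j), \pi(m), h_{old}$ are integers with $\pi(j) > \pi(m) > h_{old}$ we obtain $\pi(j) - 1 \geq \pi(m) > h_{old}$. In either branch $\pi'_m(j') > h_{old}$, so $\mathrm{minHash}_{\pi'_m}(X') = \pi'_m(j'_{min}) = h_{old}$.

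Finally, inspecting Algorithm~\ref{alg:update_sketch_del} shows that in the branch $h_{old} < a_m$ the algorithm outputs exactly $h_{new} = h_{old}$; chaining this with the previous paragraph yields $h_{new} = h_{old} = \mathrm{minHash}_{\pi'_m}(X')$, as claimed. I expect the only delicate point to be the decrement case $\pi(j) > \pi(m)$, where one must be sure that subtracting $1$ from a larger $\pi$-value can never pull it down to or below $h_{old}$; this is precisely where the integrality gap $\pi(j) - 1 \geq \pi(m) > h_{old}$ is used. Note that the argument nowhere refers to the value of the deleted bit $b$, which is consistent with Algorithm~\ref{alg:update_sketch_del} taking the same branch for both $b = 0$ and $b = 1$ whenever $h_{old} < a_m$.
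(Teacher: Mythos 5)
Your proof is correct, and it reaches the conclusion by a route that differs in mechanism from the paper's. The paper argues positionally: it views the data through the permuted sequence $X^\pi$ (where $X^\pi[i]=X[j]$ for $\pi(j)=i$), notes that deleting the feature at position $m$ removes the entry at position $a_m$ of this sequence, and since $a_m > h_{old}$ the prefix up to $h_{old}$ is untouched, so the first $1$ stays at position $h_{old}$. Your argument instead works with permutation values: you reuse the bijection $J\setminus\{m\}\to J'$ and Equation~\eqref{eq:eq111}, $\pi'_m(j')=\pi(j)-\mathbbm{1}_{\{\pi(j)>\pi(m)\}}$, from the proof of Theorem~\ref{thm:lift_del}, pin down the minimizer, and verify element by element that no other value can drop to or below $h_{old}$ — the decrement case being settled by the integrality bound $\pi(j)-1\geq\pi(m)>h_{old}$. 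In effect you transport the technique the paper uses on the insertion side (Proposition~\ref{prop:prop1} via Proposition~\ref{prop:indicator}) to the deletion side. What your route buys is rigor: the paper's displayed-vector argument silently assumes that $X'^{\pi'_m}$ is exactly $X^\pi$ with coordinate $a_m$ excised, which is precisely what Equation~\eqref{eq:eq111} and the bijection establish, and your case split makes explicit why decremented values cannot cross below $h_{old}$. What the paper's route buys is brevity and intuition — once the sequence picture is accepted, the conclusion is immediate, with no case analysis needed.
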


\begin{proof}
We know that $h_{old}$ is the minimum index at which we see a $1$ while iterating  through the features of $X$ in order permutation of $\pi$, which is the  same as going through the elements of \(X^{\pi}\) in the order \(\{1,...,d\}\). We   delete \(a_m = \pi(m)\) at the index $m$ of $X$ which occurs after index \(h_{old}\) of $X^\pi$ . 
Here $X^{\pi}$ and $X'^{\pi'_m}$  will look as follows:
\begin{align*}
    X^\pi &= (0,\ldots,  X^\pi[h_{old} - 1] = 0, X^\pi[h_{old}]=1, \ldots, X^\pi[d]), \qquad \text{and}\\
    X'^{\pi'_{m}} &= (0,\ldots,  X^{\pi}[h_{old}] = 1,...,X^{\pi}[a_{m}-1],X^{\pi}[a_{m}+1]
\ldots, X^{\pi}[d]).
\end{align*}
Therefore, the minimum index at which we see a $1$ in $X'^{\pi'_m}$ remains at index \(h_{old}\), and the desired result follows.
\end{proof}

\begin{prop}\label{prop:prop445}
If $h_{old} > a_m$,  then $h_{new} = h_{old}-1=\mathrm{minHash}_{\pi'_{m}}(X')$.
\end{prop}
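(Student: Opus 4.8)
The plan is to follow the template of Proposition~\ref{prop:prop444}: write down the permuted vector $X^\pi$, apply the index bijection induced by Algorithm~\ref{alg:lift_del}, and read off the location of the first coordinate equal to $1$ in the resulting vector $X'^{\pi'_m}$. The first thing I would record is that the hypothesis $h_{old} > a_m$ already pins down the deleted bit. By definition $h_{old}$ is the smallest index $i$ with $X^\pi[i] = 1$, so every coordinate strictly before it is $0$; in particular $X^\pi[a_m] = 0$. Since $\pi(m) = a_m$, we have $X^\pi[a_m] = X[m] = b$, so the deleted feature satisfies $b = 0$, which is precisely what makes the simple decrement rule correct.

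Next I would track the shift. Deleting the feature at original index $m$ removes exactly the coordinate occupying permuted position $a_m$; by the decrement of lines~\ref{alg:line11dropPerm}--\ref{alg:line12dropPerm} of Algorithm~\ref{alg:lift_del} (equivalently Equation~\eqref{eq:eq111}), every coordinate whose permuted position exceeds $a_m$ moves down by one, while those below $a_m$ are unchanged. Schematically,
\[
X^\pi = (\ldots, X^\pi[a_m]=0, \ldots, X^\pi[h_{old}-1]=0, X^\pi[h_{old}]=1, \ldots)
\]
collapses to
\[
X'^{\pi'_m} = (\ldots, X^\pi[a_m-1], X^\pi[a_m+1], \ldots, X^\pi[h_{old}], \ldots),
\]
so the coordinate that held the first $1$ slides from position $h_{old}$ to position $h_{old}-1$.

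Finally I would conclude. Since all coordinates of $X^\pi$ before position $h_{old}$ were $0$, deleting one of them (the $0$ sitting at $a_m$) creates no new $1$ at any earlier position, so the smallest index carrying a $1$ in $X'^{\pi'_m}$ is precisely $h_{old}-1$; that is, $\mathrm{minHash}_{\pi'_m}(X') = h_{old}-1$. Matching this with the branch of Algorithm~\ref{alg:update_sketch_del} that outputs $h_{new} = h_{old}-1$ when $h_{old} > a_m$ finishes the argument. I expect no real difficulty here; the only points needing care are verifying that the removed coordinate is indeed a $0$ (so the first $1$ survives and merely shifts) and that the downward shift is by exactly one, both of which follow directly from $h_{old} > a_m$ together with the decrement rule of Algorithm~\ref{alg:lift_del}.
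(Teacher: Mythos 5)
Your proof is correct and follows essentially the same route as the paper's: both arguments write out the permuted vector $X^\pi$, observe that the deleted coordinate sits at position $a_m < h_{old}$ (hence among the leading zeros), and conclude via the unit downward shift of Algorithm~\ref{alg:lift_del} that the first $1$ moves from position $h_{old}$ to $h_{old}-1$, matching the algorithm's output. Your explicit remark that the hypothesis forces the deleted bit to be $b=0$ is a nice clarification that the paper leaves implicit, but it does not change the substance of the argument.
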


\begin{proof}
We know that $h_{old}$ is the minimum index at which we see a $1$ while iterating through the features of $X$ in order of permutation $\pi$, which is the same as going through the elements of \(X^{\pi}\) in the order \(\{1,...,d\}\). We have deleted an element  \(a_m = \pi(m)\) at the index $m$ of $X$ which occurs before index \(h_{old}\) of $X^\pi$. Here $X^{\pi}$ and $X'^{\pi'}$  will look as follows:
\begin{align*}
    X^\pi &= (0, \ldots,  X^\pi[h_{old} - 1] = 0, X^\pi[h_{old}]=1, \ldots, X^\pi[d]),\qquad \text{and,}\\
    X'^{\pi'} &= (0,\ldots,X^{\pi}[a_{m}-1],X^{\pi}[a_{m}+1],\ldots,X^{\pi}[h_{old}-1] = 1, \ldots, X^{\pi}[d]).
\end{align*}


Therefore, the minimum index at which we see a $1$ in $X'^{\pi'}$ is at index \(h_{old}-1\). Since index \(a_m\) has been deleted, the desired result follows.
\end{proof}

\begin{prop}\label{prop:prop446}
If $h_{old} = a_m$,  then $h_{new} =\mathrm{minHash}_{\pi'_m}(X')$.
\end{prop}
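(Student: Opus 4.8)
The plan is to reduce the claim to one structural fact describing how the permuted vector $X^\pi$ turns into $X'^{\pi'_m}$ when the coordinate $m$ is deleted, and then read off both $\minhash_{\pi'_m}(X')$ and the algorithm's output $h_{new}$ directly from that fact. First I would establish the correspondence: by the construction of $\mathrm{dropPerm}$ (Algorithm~\ref{alg:lift_del}) together with Equation~\eqref{eq:eq111}, every position $\ell \neq a_m$ of $X^\pi$ has its value moved to position $\ell - \mathbbm{1}_{\{\ell > a_m\}}$ of $X'^{\pi'_m}$, while the value $X^\pi[a_m] = X[m]$ sitting at the deleted coordinate is removed. This yields
\begin{align*}
X'^{\pi'_m}[i] = \begin{cases} X^\pi[i] & \text{if } i < a_m,\\ X^\pi[i+1] & \text{if } i \geq a_m, \end{cases}
\end{align*}
which is exactly the picture already used in the proofs of Propositions~\ref{prop:prop444} and~\ref{prop:prop445}, specialized to the coordinate $a_m$ being excised.

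Next I would exploit the hypothesis $h_{old} = a_m$. Since $h_{old} = \minhash_\pi(X)$ is the smallest index at which $X^\pi$ equals $1$, the equality $h_{old} = a_m$ forces $X^\pi[a_m] = 1$ and $X^\pi[i] = 0$ for all $i < a_m$; in particular the deleted coordinate carries the current minimum, so necessarily $b = 1$. Substituting into the structural fact, $X'^{\pi'_m}[i] = 0$ for all $i < a_m$, and $X'^{\pi'_m}[i] = X^\pi[i+1]$ for $i \geq a_m$. Hence the smallest index at which $X'^{\pi'_m}$ equals $1$ is $i^\ast - 1$, where $i^\ast$ denotes the smallest index strictly larger than $a_m$ with $X^\pi[i^\ast] = 1$. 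By definition this gives $\minhash_{\pi'_m}(X') = i^\ast - 1$.

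Finally I would match this against the algorithm. When $h_{old} = a_m$, Algorithm~\ref{alg:update_sketch_del} enters the loop on line~\ref{alg:line11dropHash} over $i \in \{a_m + 1, \ldots, d\}$, halts at the first $i$ with $X^\pi[i] = 1$ (namely $i = i^\ast$), and sets $h_{new} = i - 1 = i^\ast - 1$ on line~\ref{alg:line13dropHash}; therefore $h_{new} = i^\ast - 1 = \minhash_{\pi'_m}(X')$, as required. I expect the main obstacle to be the bookkeeping in the first step: one must be careful that the deleted value sits at position $a_m$ of $X^\pi$ (not at position $m$ of $X$) and that the range-compression in lines~\ref{alg:line11dropPerm}--\ref{alg:line12dropPerm} of $\mathrm{dropPerm}$ produces precisely the off-by-one shift appearing in the loop bounds, so that the recomputed first-one index $i^\ast$ aligns with the algorithm's return value $i^\ast - 1$. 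The harmless degenerate case in which $X$ has no further $1$ beyond $a_m$ is excluded by assuming $|X| \geq 2$, since then at least one nonzero coordinate survives at a position larger than $a_m = h_{old}$.
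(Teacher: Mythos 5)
Your proof is correct and takes essentially the same approach as the paper: when $h_{old} = a_m$ the deleted coordinate was the current minimum, so the algorithm's loop in lines~\ref{alg:line11dropHash}--\ref{alg:line13dropHash} rescans $X^{\pi}$ from $a_m+1$ for the next $1$ and returns that index minus one, which by the shift-by-one correspondence between $X^{\pi}$ and $X'^{\pi'_m}$ equals $\mathrm{minHash}_{\pi'_m}(X')$. The paper's own proof is a single sentence asserting this brute-force recomputation; your write-up merely supplies the index bookkeeping (and the $|X|\geq 2$ caveat for the all-zeros degenerate case) that the paper leaves implicit.
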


\begin{proof}
Note that line \ref{alg:line11dropHash}-\label{alg:line13dropHash}  of Algorithm~\ref{alg:update_sketch_del} finds the next index where $1$ occurs, in a brute-force way.
\end{proof}

\noindent\textbf{Proof of Theorem~\ref{thm:update_sketch_del}:}
Propositions~\ref{prop:prop444},~\ref{prop:prop445},~\ref{prop:prop446} completes a proof of the theorem.

\begin{remark}
The only expensive case for Algorithm~\ref{alg:update_sketch_del} happens when $h_{old}=a_m$ considered in line  (\ref{alg:line10dropHash}-\ref{alg:line18dropHash})
In this case, the algorithm has to compute the sketch in a brute-force way. However, as we choose $m$ uniformly at random, this case happens with a probability of $1/d$.
\end{remark}

\begin{remark}
We can extend our results for multiple feature deletion by repeatedly applying  Theorem~\ref{thm:lift_del}, and Theorem~\ref{thm:update_sketch_del} using the probability union bound. However, the  time complexity of this approach 
grows linearly in $n$ as also observed in the empirical results (Figure~\ref{fig:feature_deletion_exp_plot}, Section~\ref{sec:experiments}). In the following subsection, we present an algorithm that performs multiple parallel deletions that helps achieve much better speedups.

\end{remark}

\subsection{Algorithm   for multiple feature deletion  -- $\mathrm{multipleDropHash}$:}\label{sec:multiDropHash}
\begin{algorithm}[H] 
\DontPrintSemicolon
\SetNoFillComment
 \textbf{Input:~}{Permutation $\pi$, array $R$ with $|R|=n$.}\\
 \textbf{Output:~}{$(d-n)$-dimensional  permutation ${\pi}'.$}\\
\(R \leftarrow \mathrm{sorted}(R)\)\\ \tcc{sorting array $R$ in the ascending order}
 \For{ $i=1$ \text{to}  $n$}
  {
  $R'[i] = R[i]-(i-1)$
  }
   
$\pi = \pi'$ \tcc{Initialization Step}
 \For{$i \in \{1,2,\ldots n\}$}
  {
  $\pi' = \mathrm{dropPerm}(\pi',R'[i])$\\ \tcc{Calling \text{Algorithm}~\ref{alg:lift_del} with $\pi=\pi'$ and $r=R'[i]$}
  }
  \Return ${\pi}’$\\
 \caption{$\mathrm{multipleDropPerm}(\pi, R)$.}\label{alg:lift_del_multiple}
\end{algorithm}
\begin{algorithm}[H] 
\DontPrintSemicolon
\SetNoFillComment
 \textbf{Input:~}{ $h_{old}:=\mathrm{minHash}_{\pi}(X)$,  $\pi$,  $M$, $X$.}
 \textbf{Output:~}{$h_{new}:=\mathrm{multipleDropHash}(M,X,\pi,h_{old})$.}\\%
 Let $\pi(M) :=  \{\pi(m) : m \in M\}$\\
  \eIf {$h_{old} < \min(\pi_M)$}
        {$h_{new}=h_{old}$}
  {
     { \eIf{$h_{old} \notin  \pi_M$}
         {$h_{new}=h_{old}- \left|\{x:  x \in \pi_M {~and~} x \leq h_{old}\}\right| $ }
         {\tcc{Compute $\minhash$ from scratch.}~\label{alg:line11multipleDropHash}
         $h_{new}= \min\{\pi(i): x_i=1 \text{~and~} \pi(i) \notin \pi_M\} - |\{x:  x \in \pi_M \text{~and~} x \leq h_{old}\}|$ }
       
     }
     \Return $h_{new}$
}
\caption{$\mathrm{multipleDropHash}(M,X,\pi,h_{old})$}\label{alg:update_sketch_multiple_deletion}
\end{algorithm}

The results presented in this subsection give algorithms for multiple feature deletion and are generalizations of the result presented in Subsection~\ref{subsec:dropHash}. We consider two algorithms for enabling $\minhash$ for multiple feature deletion. Suppose we have a data point $X \in \{0, 1\}^d$ and its $\minhash$ with permutation $\pi$ is $h_{old}$. Let $X’ \in\{0, 1\}^{d-n}$ be the data point after deleting $n$ features. Algorithm~\ref{alg:update_sketch_multiple_deletion} takes $h_{old}$, positions of the  deleted feature \(M = \{m_1,m_2,\ldots,m_n\}\) and outputs the updated sketch $h_{new}$. Algorithm~\ref{alg:lift_del_multiple} is implicit and is used to prove the correctness of Algorithm~\ref{alg:update_sketch_multiple_deletion}. Algorithm~\ref{alg:lift_del_multiple} takes the permutation $\pi$, and $M$ as input, and outputs a $(d-n)$ dimensional  permutation $\pi’_M$. We show in Theorem~\ref{thm:multiple_lift_del}   that  $\pi'_M$ satisfies the condition stated in Equation~\eqref{eq:minwise}. Then in Theorem~\ref{thm:update_sketch_multiple_del}, we show that $h_{new}=\minhash_{\pi’_M}(X’)$. Therefore, due to  Theorems~\ref{thm:multiple_lift_del},~\ref{thm:update_sketch_multiple_del}, and   Equation~\eqref{eq:minwise_jaccard} (and \citep{BroderCFM98}) the sketch of data points obtained after Algorithm~\ref{alg:update_sketch_multiple_deletion} approximates the pairwise Jaccard similarity.

\begin{example}\label{example:muldrophash}

Suppose our input, original permutation, and  the list of deleted features are \(X = [1,0,0,1,0,1,0]\),  \(\pi = [6,3,1,7,2,5,4]\), and  \(M = [2,4]\), respectively. Thus after deletion \(X' = [1,0,0,1,0]\) and \(\pi' = [5,1,2,4,3]\). The value of  $h_{old}=5$. We can calculate \(\pi(M)  = [3,7]\) and therefore \(\min(\pi(M)) = 3\).  Thus, in this case \(h_{old}=5>3=\min(\pi(M))\) and also $h_{old} \notin \pi(M)$. Therefore,  the \(\mathrm{minHash}_{\pi'_M}(X')\) is \(h_{old}-|\{x \mid x \in \pi_M \text{~and~} x \leq h_{old}\}|= 5-1 = 4\).

\end{example}

\begin{theorem}\label{thm:multiple_lift_del}
 Let  $\pi$ be a $d$-dimensional minwise independent permutation.
 Then for every $R$, 
 the $(d-n)$ dimensional permutation $\pi’$ obtained from Algorithm~\ref{alg:lift_del_multiple} satisfies the condition of being minwise independent permutation. 
 \end{theorem}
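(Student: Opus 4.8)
The plan is to prove this by induction on the number of deletions $n$, using Theorem~\ref{thm:lift_del} as the single-step engine. The crucial observation is that, unlike the insertion case, Theorem~\ref{thm:lift_del} holds deterministically for \emph{any} index position $r$: dropping one feature always maps a minwise independent permutation to a minwise independent permutation of one lower dimension, with probability $1$. Consequently no union bound or sparsity assumption is needed here, which is why the statement is unconditional (``for every $R$'').

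First I would unfold Algorithm~\ref{alg:lift_del_multiple}: after sorting $R$ and forming the shifted indices $R'[i] = R[i] - (i-1)$, the algorithm produces a chain $\pi = \pi^{(0)}, \pi^{(1)}, \ldots, \pi^{(n)} = \pi'$, where $\pi^{(i)} = \mathrm{dropPerm}(\pi^{(i-1)}, R'[i])$ and $\pi^{(i)}$ has dimension $d-i$. Before running the induction I would check that each call is well-defined, i.e., that $R'[i]$ is a legal index into the dimension-$(d-i+1)$ permutation $\pi^{(i-1)}$. Since the sorted deletion positions are distinct, the $i$-th one satisfies $R[i] \geq i$, giving $R'[i] \geq 1$; and $R[i] \leq d$ gives $R'[i] \leq d-i+1$, so indeed $R'[i] \in \{1, \ldots, d-i+1\}$.

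The induction itself is then immediate. The base case is the hypothesis that $\pi^{(0)} = \pi$ is minwise independent. For the inductive step, assuming $\pi^{(i-1)}$ is a minwise independent permutation of dimension $d-i+1$, Theorem~\ref{thm:lift_del} applied to $\pi^{(i-1)}$ with the valid index $r = R'[i]$ yields that $\pi^{(i)} = \mathrm{dropPerm}(\pi^{(i-1)}, R'[i])$ is a minwise independent permutation of dimension $d-i$. Iterating up to $i = n$ gives that $\pi' = \pi^{(n)}$ is minwise independent of dimension $d-n$, as claimed.

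The step I expect to require the most care is not the induction but the bookkeeping of the index shifts: one must confirm that the reindexing $R'[i] = R[i] - (i-1)$ keeps every intermediate call in range, and that each intermediate $\pi^{(i)}$ genuinely satisfies the minwise independence property so that Theorem~\ref{thm:lift_del} remains applicable at the next step. Note, however, that the \emph{value} of the shift does not affect the minwise independence conclusion at all, since Theorem~\ref{thm:lift_del} holds for any deletion index; the shift is instead what later guarantees, in the correctness argument for the sketch (Theorem~\ref{thm:update_sketch_multiple_del}), that the intended features are the ones removed.
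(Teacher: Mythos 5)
Your proposal is correct and follows essentially the same route as the paper, whose entire proof is the one-line observation that the claim ``follows by repeated application of Theorem~\ref{thm:lift_del}.'' Your write-up simply makes explicit what the paper leaves implicit: the induction, the fact that Theorem~\ref{thm:lift_del} holds unconditionally for every deletion index (so no union bound or sparsity assumption is needed), and the range check that the shifted indices $R'[i]=R[i]-(i-1)$ remain valid for each intermediate $(d-i+1)$-dimensional permutation.
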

\begin{proof}
A proof follows by repeated application of  Theorem~\ref{thm:lift_del}.
\end{proof}
 
  \begin{restatable}{theorem}{dropMultipleDelUpdateSketch}\label{thm:update_sketch_multiple_del}
 Let $\pi’_{M}$ be the $(d-n)$-dimensional permutation outputted  by Algorithm~\ref{alg:lift_del_multiple} by setting \(R=M\). Then, the  sketch obtained from Algorithm~\ref{alg:update_sketch_multiple_deletion} is exactly the same as the  sketch  obtained with  the permutation ${\pi}’_M$ on $X'$, that is, $\mathrm{multipleDropHash}(\pi, M,B,h_{old}) = \mathrm{minHash}_{\pi'_M}(X')$.

\end{restatable}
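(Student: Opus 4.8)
The plan is to establish $\mathrm{multipleDropHash}(M,X,\pi,h_{old}) = \mathrm{minHash}_{\pi'_M}(X')$ by adapting the proof structure of Theorem~\ref{thm:update_sketch_del} to the multiple-deletion setting, mirroring how Theorem~\ref{thm:update_sketch_multiple} generalized Theorem~\ref{thm:update_sketch}. The central object is the permuted vector $X'^{\pi'_M}$, and the key bookkeeping fact I would first isolate (as an analogue of Proposition~\ref{prop:prop999}) is how indices shift: if a nonzero coordinate of $X$ had permuted position $p = \pi(j)$, then after deleting the $n$ features at positions $M$ and applying Algorithm~\ref{alg:lift_del_multiple}, that same coordinate lands at position $p - |\{x : x \in \pi_M \text{~and~} x < p\}|$ in $X'^{\pi'_M}$. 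This follows from Equation~\eqref{eq:eq111} applied iteratively, since each $\mathrm{dropPerm}$ call decrements $\pi'(j')$ by one for every deleted permutation-value lying below it; the sorting and re-indexing of $R$ in Algorithm~\ref{alg:lift_del_multiple} guarantees these decrements compose correctly without double-counting.

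With this shift rule in hand, I would split into the three cases matching the branches of Algorithm~\ref{alg:update_sketch_multiple_deletion}. First, when $h_{old} < \min(\pi_M)$: none of the deleted permutation values lie at or below $h_{old}$, so the surviving minimum coordinate keeps its permuted position, giving $h_{new} = h_{old}$; this is the multi-deletion analogue of Proposition~\ref{prop:prop444}. Second, when $h_{old} > \min(\pi_M)$ but $h_{old} \notin \pi_M$ (i.e.\ the old $\minhash$ coordinate survives deletion): the coordinate realizing $h_{old}$ is still present in $X'$, but its position drops by exactly the number of deleted permutation values below it, yielding $h_{new} = h_{old} - |\{x : x \in \pi_M \text{~and~} x \leq h_{old}\}|$, generalizing Proposition~\ref{prop:prop445}. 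Third, when $h_{old} \in \pi_M$, the coordinate that used to achieve the minimum has itself been deleted, so the new $\minhash$ must be recomputed as the smallest surviving nonzero permuted value $\min\{\pi(i) : x_i = 1 \text{~and~} \pi(i) \notin \pi_M\}$, then shifted down by the count of deleted values below it; this is the multi-deletion analogue of Proposition~\ref{prop:prop446}.

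The main obstacle I anticipate is the third case, where the previous minimizer is deleted. In the single-deletion Proposition~\ref{prop:prop446} the recomputation was a simple forward scan, but here several features vanish simultaneously, so I must argue that the correct new minimum is obtained by taking the smallest permuted value among surviving nonzero coordinates (explicitly excluding all of $\pi_M$, not just the one deleted value) and then applying the global shift consistently. The subtle point is that the shift applied to this freshly-found minimum uses $|\{x : x \in \pi_M \text{~and~} x \leq h_{old}\}|$ rather than the count below the new minimum value; I would need to verify these two counts agree, which holds precisely because every deleted value strictly between $h_{old}$ and the new minimizer would have been a surviving-or-not ambiguity — but since all values in $\pi_M$ below the new minimizer are deleted and the new minimizer exceeds $h_{old}$, the relevant counts coincide. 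Once the three cases are reconciled with the shift rule, I would conclude by noting that Theorem~\ref{thm:multiple_lift_del} already guarantees $\pi'_M$ is minwise independent, so the computed $h_{new}$ is a legitimate $\minhash$ value, completing the proof.
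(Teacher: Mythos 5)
Your overall plan (an index-shift rule plus a three-case analysis matching the branches of Algorithm~\ref{alg:update_sketch_multiple_deletion}) is the same route the paper takes via Propositions~\ref{prop:prop3333}, \ref{prop:prop4444}, and \ref{prop:prop5555}, and your first two cases are sound. The genuine gap is in your third case ($h_{old} \in \pi_M$). You correctly isolate the subtle point --- the algorithm shifts the freshly computed minimum by $|\{x : x \in \pi_M \text{ and } x \leq h_{old}\}|$ rather than by the number of deleted permutation values below the \emph{new} minimizer --- but your claimed resolution (``the relevant counts coincide'') is false. Writing $p^* = \min\{\pi(i) : x_i = 1 \text{ and } \pi(i) \notin \pi_M\}$, your own shift rule says the surviving coordinate lands at position $p^* - |\{x \in \pi_M : x < p^*\}|$, and this differs from the algorithm's output by $|\{x \in \pi_M : h_{old} < x < p^*\}|$, which need not be zero: nothing prevents deleted features from having permutation values strictly between $h_{old}$ and $p^*$.

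Concretely, take $d = 7$, $X = (1,0,0,0,0,0,1)$, $\pi = (3,1,2,5,4,6,7)$, $M = \{1,4\}$, so $\pi_M = \{3,5\}$ and $h_{old} = 3 \in \pi_M$. Then $X' = (0,0,0,0,1)$, Algorithm~\ref{alg:lift_del_multiple} yields $\pi'_M = (1,2,3,4,5)$, and $\mathrm{minHash}_{\pi'_M}(X') = 5$; but Algorithm~\ref{alg:update_sketch_multiple_deletion} outputs $h_{new} = 7 - |\{x \in \{3,5\} : x \leq 3\}| = 6$. So in this case the identity you are trying to prove is itself false, and no argument can close the gap without either changing the shift count to $|\{x \in \pi_M : x < p^*\}|$ or assuming $\pi_M \cap (h_{old}, p^*) = \emptyset$. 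For what it is worth, the paper's own treatment of this case (Proposition~\ref{prop:prop5555}) is a one-sentence appeal to ``brute-force'' recomputation that never verifies the shift, so it glosses over exactly the point at which your attempt breaks; your attempt has the virtue of making the problematic step explicit. Your closing appeal to Theorem~\ref{thm:multiple_lift_del} (minwise independence of $\pi'_M$) is also not needed for, and does not help with, this exact-equality statement.
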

We require the following propositions to prove the theorem.

\begin{prop}\label{prop:prop3333}
If $h_{old} < \min(\pi(M))$,  then $h_{new} = h_{old}=\mathrm{minHash}_{\pi'_M}(X')$.
\end{prop}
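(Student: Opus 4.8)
The plan is to establish the base case of the multiple-deletion sketch update, namely Proposition~\ref{prop:prop3333}, which handles the situation where $h_{old}$ is strictly smaller than the smallest permutation value among the deleted positions. The intuition is that when $h_{old} < \min(\pi(M))$, the minimizing index of the old sketch sits entirely ``to the left'' of every deleted element in the $\pi$-ordered view of $X$, so deleting those features cannot disturb the position of the old minimum. First I would set up the permuted-vector notation $X^{\pi}$ exactly as in the proofs of Propositions~\ref{prop:prop444}--\ref{prop:prop446}, recalling that $h_{old}$ is the smallest index at which $X^{\pi}$ takes value $1$, equivalently $\mathrm{minHash}_{\pi}(X)$.

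Next I would argue that every deleted feature, being at a position $m \in M$ with $\pi(m) \in \pi_M$ and $\pi(m) \geq \min(\pi_M) > h_{old}$, corresponds in the $\pi$-ordered view $X^{\pi}$ to an index strictly greater than $h_{old}$. Hence all deletions occur to the right of the entry that realizes $h_{old}$. I would then invoke Theorem~\ref{thm:multiple_lift_del} (equivalently, the implicit construction of $\pi'_M$ via Algorithm~\ref{alg:lift_del_multiple}) to conclude that the bijection between surviving indices preserves relative order, so the first $1$ in $X'^{\pi'_M}$ is still the one that was at position $h_{old}$, and its new index is unchanged because the shift $-|\{x : x \in \pi_M \text{ and } x \leq h_{old}\}|$ is zero in this regime (no deleted $\pi$-value is $\leq h_{old}$). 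This matches exactly the value $h_{new} = h_{old}$ returned by the first branch of Algorithm~\ref{alg:update_sketch_multiple_deletion}.

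The main obstacle, though a mild one, is bookkeeping the index shift cleanly: I must make sure that ``no deleted element lies at or before $h_{old}$ in the $\pi$-ordering'' is correctly translated into the statement that the count $|\{x : x \in \pi_M \text{ and } x \leq h_{old}\}|$ vanishes, so that the general shifted formula degenerates to the identity in this case. This is really the multiple-deletion analogue of Proposition~\ref{prop:prop444}, so I would keep the argument parallel to that single-deletion proof, simply replacing the single deleted value $a_m$ by the set $\pi_M$ and observing that the hypothesis $h_{old} < \min(\pi_M)$ forces all of $\pi_M$ to exceed $h_{old}$. The remaining cases (where $h_{old} \notin \pi_M$ but exceeds some deleted values, and where $h_{old} \in \pi_M$) would be handled in the companion propositions, exactly mirroring Propositions~\ref{prop:prop445} and \ref{prop:prop446}, and the three together would yield Theorem~\ref{thm:update_sketch_multiple_del}.
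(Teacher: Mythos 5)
Your proposal is correct and follows essentially the same route as the paper's proof: both work in the $\pi$-ordered view $X^{\pi}$, observe that the hypothesis $h_{old} < \min(\pi(M))$ forces every deleted position to lie strictly to the right of the first $1$, and conclude that the first $1$ in $X'^{\pi'_M}$ remains at index $h_{old}$, matching the first branch of Algorithm~\ref{alg:update_sketch_multiple_deletion} (the paper leaves the vanishing of the shift count implicit, which you spell out). One small correction: the order-preservation of surviving indices comes from the structure of Algorithm~\ref{alg:lift_del_multiple} itself (as your parenthetical notes), not from Theorem~\ref{thm:multiple_lift_del}, which asserts minwise independence of $\pi'_M$ rather than order preservation.
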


\begin{proof}
Recall that  $X^\pi[i] := X[j]$, where $\pi(j) = i$. We know that $h_{old}$ is the minimum index at which we see a $1$ while going through the points of $X$ in order permutation $\pi$, which is the same as going through the elements of \(X^{\pi}\) in the order \(\{1,...,d\}\). Now we have deleted a set of indices $M = \{m_1,\ldots,m_n\}$ which occurs after \(h_{old}.\)   
Here $X_{\pi}$ and $X'_{\pi'_M}$  will look as follows:
\begin{align*}
X^\pi &= (0,\ldots, X^\pi[h_{old} - 1] = 0, X^\pi[h_{old}]=1, \ldots, X^\pi[d]), \qquad \text{and}\\
X'^{\pi'_M} &= (0,\ldots,  X^{\pi}[h_{old}] = 1, \ldots, X^{\pi}[d-n]).
\end{align*}
 So we see that the first time we see a $1$ remains at index \(h_{old}\), and the desired result follows.
\end{proof}

\begin{prop}\label{prop:prop4444}
If $h_{old} \notin \pi(M)$ and $h_{old} > \min (\pi(M))$, then   $h_{new} = h_{old}-|\{x \mid x \in \pi_M \text{~and~} x \leq h_{old}\}|=\mathrm{minHash}_{\pi'_M}(X').$
\end{prop}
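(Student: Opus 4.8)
The plan is to prove Proposition~\ref{prop:prop4444} by reasoning about the permuted vector $X^{\pi}$ exactly as in the proofs of Propositions~\ref{prop:prop3333}, \ref{prop:prop444}, and \ref{prop:prop445}, but now accounting for the fact that several deleted positions may lie \emph{before} the index $h_{old}$ in the permuted order. First I would recall that $h_{old}$ is the smallest index $i$ at which $X^{\pi}[i]=1$, and that deleting the features indexed by $M$ corresponds to removing the entries $\{\pi(m):m\in M\}=\pi_M$ from the permuted vector $X^{\pi}$ and then re-indexing the survivors consecutively to form $X'^{\pi'_M}$. The hypotheses $h_{old}\notin\pi_M$ and $h_{old}>\min(\pi_M)$ guarantee two things: the $1$ that realized $h_{old}$ is \emph{not} itself deleted (so it survives into $X'^{\pi'_M}$), and at least one deleted entry sits strictly to its left.

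Next I would make the shift count explicit. The number of deleted entries lying at or before position $h_{old}$ in $X^{\pi}$ is precisely $|\{x : x\in\pi_M \text{ and } x\le h_{old}\}|$; since $h_{old}\notin\pi_M$, the condition ``$\le h_{old}$'' is the same as ``$<h_{old}$'', so all these deletions are strictly to the left of the surviving $1$. Because every entry of $X^{\pi}$ at a position smaller than $h_{old}$ is a $0$ (by minimality of $h_{old}$), deleting these left entries removes only zeros and shifts the surviving $1$ leftward by exactly that count. Meanwhile no deletion occurs strictly before the $1$ that are to its right that could re-enter before it, and any deletions to the right of $h_{old}$ do not affect which index is the first $1$. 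Hence the first $1$ in $X'^{\pi'_M}$ appears at index $h_{old}-|\{x : x\in\pi_M \text{ and } x\le h_{old}\}|$, which is exactly $\min\{\pi'_M(j):j\in J'\}=\mathrm{minHash}_{\pi'_M}(X')$.

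To make this rigorous I would invoke the bijection and the decrement formula $\pi'(j')=\pi(j)-\mathbbm{1}_{\{\pi(j)>\pi(r)\}}$ from the single-deletion analysis (Theorem~\ref{thm:lift_del}, Equation~\eqref{eq:eq111}), applied iteratively through the $n$ calls to $\mathrm{dropPerm}$ in Algorithm~\ref{alg:lift_del_multiple}; the net effect is that a surviving permuted value $v$ is decremented by the number of elements of $\pi_M$ strictly less than $v$. Specializing this to the value $v=h_{old}$ (which survives by hypothesis) yields the claimed shift, and comparing with the else-branch of Algorithm~\ref{alg:update_sketch_multiple_deletion} confirms that the algorithm outputs exactly this quantity. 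Finally I would note that the shifted value is genuinely the new minimum: every $1$-entry of $X^{\pi}$ other than the one at $h_{old}$ sits at an index $>h_{old}$, and since decrements are monotone (a larger pre-value receives at least as large a total decrement offset relative to its own position is irrelevant here—what matters is that the relative order of surviving $1$'s is preserved), the entry that was smallest remains smallest.

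The main obstacle I expect is the bookkeeping of the cumulative shift across the sequential deletions in Algorithm~\ref{alg:lift_del_multiple}: one must verify that the single-step decrement rule composes correctly when the positions $R'[i]=R[i]-(i-1)$ are used on the progressively shrinking permutation, so that the total leftward shift of the surviving $1$ equals $|\{x : x\in\pi_M \text{ and } x\le h_{old}\}|$ and not some off-by-one variant. Once that counting identity is pinned down, the rest follows directly from the minimality characterization of $h_{old}$ and the order-preservation established in Theorem~\ref{thm:lift_del}.
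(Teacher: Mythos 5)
Your proposal is correct and follows essentially the same route as the paper's proof: both argue on the permuted vector $X^{\pi}$, observe that all entries before $h_{old}$ are zero and that the surviving first $1$ (not deleted, since $h_{old}\notin\pi_M$) shifts left by exactly $|\{x \in \pi_M : x \leq h_{old}\}|$, which matches the algorithm's output. Your added plan to justify the shift rigorously by composing the single-deletion decrement formula of Equation~\eqref{eq:eq111} across the iterated $\mathrm{dropPerm}$ calls is a careful elaboration of a step the paper simply asserts, not a different argument.
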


\begin{proof}
We know that $h_{old}$ is the minimum index at which we see a $1$ while going through the points of $X$ in order permutation $\pi$, which is the same as going through the elements of \(X^{\pi}\) in the order \(\{1,...,d\}\). Now we have deleted an element at indices $M=\{m_1,\ldots,m_n\}$ which does not contain the index containing \(h_{old}.\)  So in \(X'^{\pi'}\) \(h_{old}\) will be shifted to the left by the number of indices deleted before \(h_{old}\) which is given by \(|\{x \mid x \in \pi_M \text{~and~} x \leq h_{old}\}|\). Therefore,  $X_{\pi}$ and $X'_{\pi'}$  will look as follows:
\begin{align*}
    X^\pi& = (0,\ldots  X^\pi[h_{old} - 1] = 0, X^\pi[h_{old}]=1, \ldots, X^\pi[d]), \qquad \text{and}\\
    X'^{\pi'_M} &= (0, \ldots,X^{\pi}[h_{old}-|\{x \mid x \in \pi_M \text{~and~} x \leq h_{old}\}| = 1,\ldots, X^{\pi}[d-n]).
\end{align*}
 

So we see that the first time we see a $1$ is at index            \(h_{old}-|\{x \mid x \in \pi_M \text{~and~} x \leq h_{old}\}|)\) since index \(a_m\) has been deleted and the desired result follows.

\end{proof}

\begin{prop}\label{prop:prop5555}
If $h_{old}$ is in $\pi({M})$,  then $h_{new} =\mathrm{minHash}_{\pi'_M}(X')$.
\end{prop}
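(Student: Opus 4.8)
The plan is to mirror the $X^\pi$-to-$X'^{\pi'_M}$ compression argument already used for Propositions~\ref{prop:prop3333} and~\ref{prop:prop4444}, now specialized to the remaining case where the feature realizing the old minimum is itself deleted. First I would recall the permuted-vector viewpoint: writing $X^\pi[i] := X[j]$ with $\pi(j)=i$, the value $h_{old}$ is the smallest index $i$ with $X^\pi[i]=1$, and $X'^{\pi'_M}$ is exactly the vector obtained from $X^\pi$ by deleting every entry whose position lies in $\pi_M=\{\pi(m):m\in M\}$ and then sliding the surviving entries leftward to close the gaps. This is the same permutation that Theorem~\ref{thm:multiple_lift_del} certifies to be min-wise independent, and it is essentially the only structural fact I need.

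Since the hypothesis here is $h_{old}\in\pi_M$, the entry at position $h_{old}$ of $X^\pi$ is one of the deleted entries, so the old minimizer no longer survives. I would then locate the new minimizer: the first surviving $1$ in $X^\pi$ sits at position $v^\ast := \min\{\pi(i): x_i=1 \text{ and } \pi(i)\notin\pi_M\}$, which is precisely the first term in the output of Algorithm~\ref{alg:update_sketch_multiple_deletion} on line~\ref{alg:line11multipleDropHash}. A short monotonicity remark makes this rigorous: by repeatedly applying the single-deletion update of Equation~\eqref{eq:eq111}, a surviving index $j$ satisfies $\pi'_M(j')=\pi(j)-|\{x\in\pi_M: x<\pi(j)\}|$, which is an increasing function of $\pi(j)$ on the surviving indices. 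Hence the minimum of $\pi'_M$ over the non-zero indices of $X'$ is attained exactly at the surviving non-zero index whose $\pi$-value is $v^\ast$.

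Finally I would compute where $v^\ast$ lands after the leftward compression: its new position equals $v^\ast$ minus the number of deleted entries sitting strictly before it in $X^\pi$. Subtracting this shift from $v^\ast$ should reproduce the value returned on line~\ref{alg:line11multipleDropHash}, yielding $h_{new}=\minhash_{\pi'_M}(X')$ and, together with Propositions~\ref{prop:prop3333} and~\ref{prop:prop4444}, completing the proof. I expect this shift bookkeeping to be the main obstacle: the correct shift is $|\{x\in\pi_M : x<v^\ast\}|$, whereas the algorithm writes $|\{x\in\pi_M : x\le h_{old}\}|$, so the crux is to argue that these two counts agree, i.e.\ that no deleted position carries a $\pi$-value strictly between $h_{old}$ and $v^\ast$. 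Because $h_{old}$ is the global minimum non-zero value and $v^\ast$ the first surviving one, I would try to rule out intermediate deletions by exploiting this ordering; nailing down exactly this equality of the two shift counts is the delicate step on which the whole proposition rests.
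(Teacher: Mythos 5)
Your framework is the right one and is essentially the paper's own (the compressed-vector view of $X^\pi$, the identification of the first surviving $1$, and the shift count), but the step you postpone is exactly where the argument dies, and it cannot be repaired. You hope to show that no element of $\pi_M$ lies strictly between $h_{old}$ and $v^* := \min\{\pi(i) : x_i = 1 \text{ and } \pi(i) \notin \pi_M\}$, so that $|\{x \in \pi_M : x < v^*\}| = |\{x \in \pi_M : x \le h_{old}\}|$. This is false: the positions of $X^\pi$ strictly between $h_{old}$ and $v^*$ are precisely those carrying either a zero or a deleted entry, and nothing in the hypothesis $h_{old} \in \pi_M$ keeps further deletions out of that range. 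Concretely, let $d=4$, let $\pi$ be the identity permutation, $X = (0,1,1,1)$, and $M = \{2,3\}$, so that $\pi_M = \{2,3\}$, $h_{old} = 2 \in \pi_M$, and $v^* = 4$. Then $X' = (0,1)$, and Algorithm~\ref{alg:lift_del_multiple} outputs $\pi'_M = (1,2)$, so $\minhash_{\pi'_M}(X') = 2$; but line~\ref{alg:line11multipleDropHash} of Algorithm~\ref{alg:update_sketch_multiple_deletion} returns $h_{new} = 4 - |\{x \in \{2,3\} : x \le 2\}| = 3$. So the equality of the two shift counts is not a delicate step to be nailed down; it simply fails, and with it the proposition as stated.

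What your bookkeeping actually uncovers is an issue the paper's own proof never confronts: that proof is the single sentence that line~\ref{alg:line11multipleDropHash} ``finds the next index where $1$ occurs in a brute-force way,'' which justifies the first term $v^*$ but silently assumes the subtracted shift is correct. Your analysis shows the correct update is $h_{new} = v^* - |\{x \in \pi_M : x \le v^*\}|$ (equivalently with strict inequality, since $v^* \notin \pi_M$); with that replacement, your monotonicity argument via Equation~\eqref{eq:eq111} closes the proof, and the corrected formula is consistent with Proposition~\ref{prop:prop4444}, where the count is effectively taken at the surviving minimum (there $v^* = h_{old}$ because $h_{old}$ survives). As written, however, Proposition~\ref{prop:prop5555} holds only under the additional assumption that $\pi_M$ contains no value strictly between $h_{old}$ and $v^*$ --- plausible with high probability for sparse data and random deletions, but not implied by the stated hypothesis.
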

\begin{proof}
Since \(h_{old}\) has been deleted note that line number~\ref{alg:line11multipleDropHash}  of Algorithm~\ref{alg:update_sketch_multiple_deletion} finds the next index where $1$ occurs in a brute-force way.
\end{proof}



\noindent\textbf{Proof of Theorem~\ref{thm:update_sketch_multiple_del}:}
\begin{proof}
Propositions~\ref{prop:prop3333},~\ref{prop:prop4444},~~\ref{prop:prop5555} completes a proof of the theorem. 
\end{proof}

\begin{remark}
Since $\pi$ is minwise-independent, the probability that \(h_{old}\) is in \(\pi(M)\) is
\(O(|M|/d).\)
Thus the expensive part of Algorithm~\ref{alg:update_sketch_multiple_deletion} 
 (line \ref{alg:line11multipleDropHash}) happens rarely. 
\end{remark}

\section{Experiments}
\label{sec:experiments}
\noindent\textbf{Hardware description:}  CPU model name: Intel(R) Xeon(R) CPU @ 2.20GHz; RAM:12.72GB; Model name: Google Colab.
 
\noindent\textbf{Datasets and baselines:} We perform our experiments  on \textit{``Bag-of-Words"} representations of text documents~\citep{UCI}. We use  the following datasets: NYTimes news articles (number of points = $300000$, dimension = $102660$),
Enron emails (number of points = $39861$, dimension= $28102$), and
KOS blog entries (number of points = $3430$, dimension = $6960$).

We consider the binary version of the data, where we focus on the presence/absence of a word in the document. For our experiments, we considered a random sample of $500$ points from the NYTimes and $2000$ points for Enron and  KOS.
 We compare the performance of our algorithms  $\mathrm{multipleLiftHash}$ and $\mathrm{multipleDropHash}$ with respect to running $\minhash$ from scratch on the updated dimension, and we refer to it as vanilla $\minhash$. We also note the performance of sequential versions of single feature insertion/deletion algorithms --  $\mathrm{liftHash}$ and $\mathrm{dropHash}$, respectively. {We  give implementation details of the baseline algorithms as the following link \url{https://tinyurl.com/y98yh6k3}.}
 \begin{table}[H]
\caption{Speedup of 
our algorithms \textit{w.r.t} their vanilla $\minhash$ version.}\label{tab:speedup}
\centering
 \resizebox{\textwidth}{!}{
\begin{tabular}{|c|c|cc|cc|cc|}
\toprule
     \multirow{2}{*}{Experiment}&\multirow{2}{*}{Method}&
      \multicolumn{2}{|c|}{NYTimes} &
       \multicolumn{2}{|c|}{Enron}&
       \multicolumn{2}{|c|}{KOS}\\
      &{}& {Max.} & {Avg.}  & {Max.} & {Avg.}    & {Max.} & {Avg.}\\
      \midrule
  \text{Feature}&$\mathrm{multipleLiftHash}$  &$54.91\times$     &$51.96\times$   &$9.61\times$     &$9.17\times$           &$24.4\times$      & $23.11\times$            \\
     \text{Insertions}&$\mathrm{liftHash}$   &$91.23\times$ & $87.38\times$ & $13.96\times$ & $12.66\times$ &   $35.00\times$ & $35.50\times$  \\
      \midrule
 \text{Feature}&  $\mathrm{multipleDropHash}$ & $109.5\times$& $105.31\times$     &$18.6\times$       & $17.01\times$      &  $46.02\times$ &$43.94\times$ \\  
\text{Deletions}&$\mathrm{dropHash}$   & $78.34\times$ & $72.79\times$ & $15.95\times$ & $14.89\times$ &$38.24\times$      &$35.71\times$   \\
\bottomrule
  \end{tabular}
}
\end{table}

\begin{figure*}[h!]
 \begin{center}
\includegraphics[height=3cm,width=\textwidth]{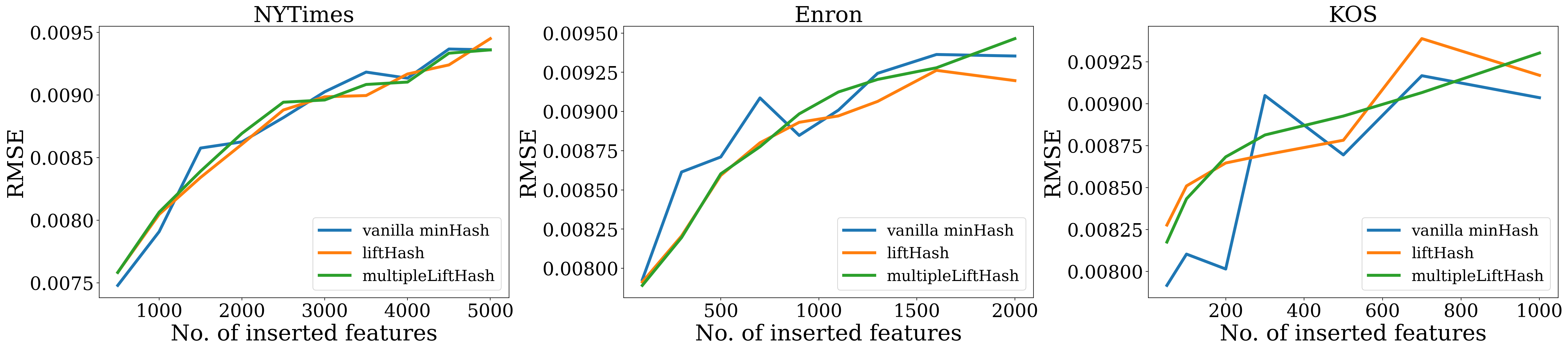}
\includegraphics[height=3cm,width=\textwidth]{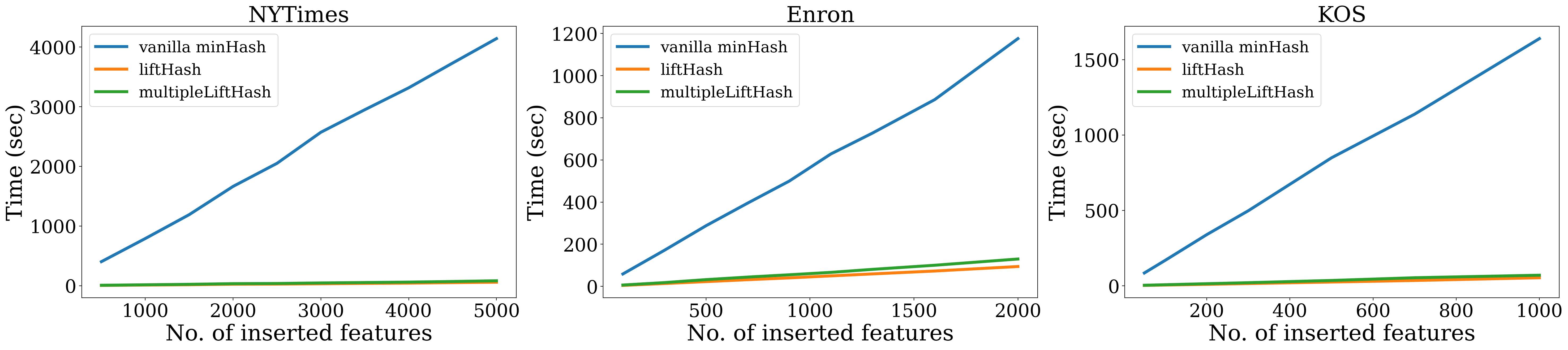}
\caption{{Comparison among $\lifthash$, $\mlifthash$, and vanilla $\minhash$ on the task of feature insertions. Vanilla $\minhash$ corresponds to computing $\minhash$ on the updated dimension.  We iteratively run $\mathrm{liftHash}$ $n$ times, where $n$ is the number of inserted features.  
}  
} \label{fig:feature_insertion_exp_plot}
\end{center}
\end{figure*}

 \begin{figure*}[h!]
\begin{center}
\includegraphics[height=3cm,width=\textwidth]{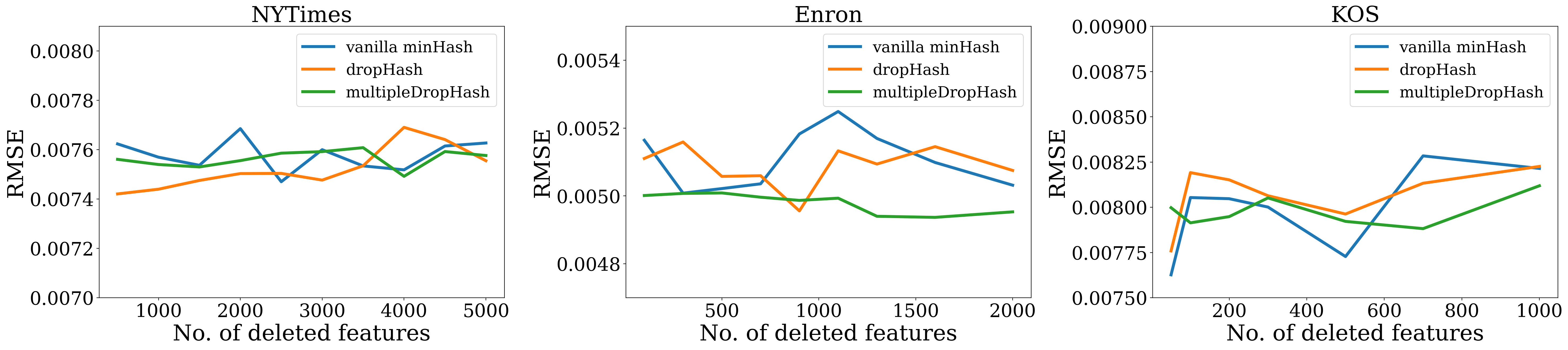}
\includegraphics[height=3cm,width=\textwidth]{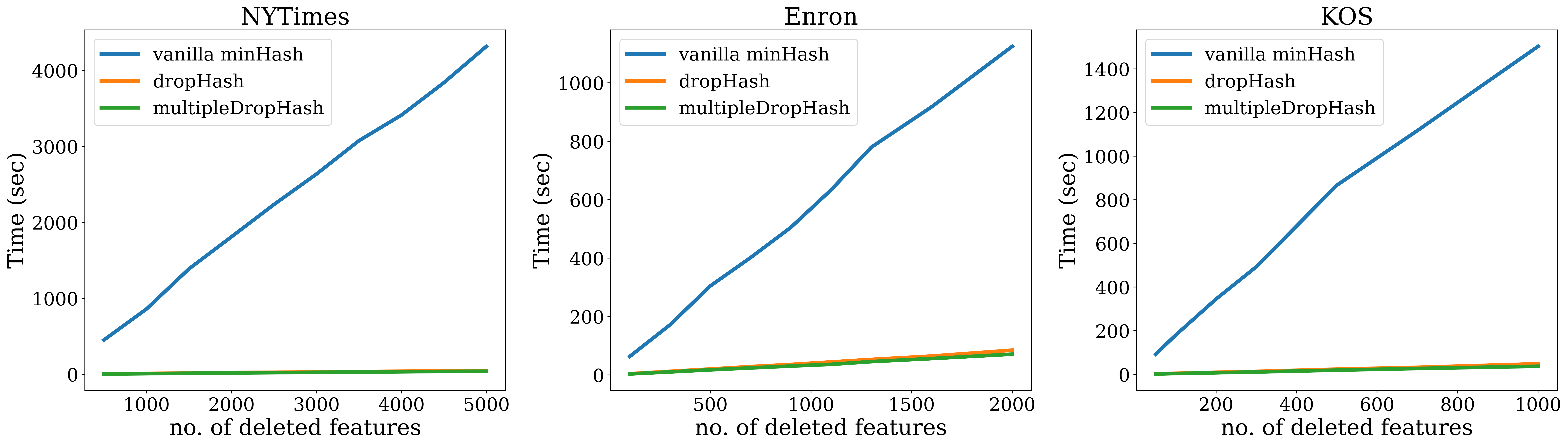}
\caption{{Comparison among $\drophash$, $\mdrophash$, and vanilla $\minhash$ on  feature deletions.  We iteratively run $\mathrm{dropHash}$ $n$ times, where $n$ is the number of deleted features.
}}  \label{fig:feature_deletion_exp_plot}
\end{center}
\end{figure*}

\subsection{Experiments for feature insertions:}\label{subsec:feature_insertion_exp}
We use two metrics for evaluation: a) $\RMSE$: to examine the quality of the sketch, and b) running time: to measure the efficiency.  For each dataset, we first create a $500$ dimensional $\minhash$ sketch using 500 independently generated permutations. 
Consider that we have a set of $n$ random indices representing the locations where features need to be inserted.For each position, we insert the bit $1$ with probability $0.1$ and $0$ with probability $0.9.$ We then run the $\lifthash$ algorithm (Algorithm~\ref{alg:update_sketch}) after each feature insertion, we repeat this step until $n$ feature insertions are done. This gives a $\minhash$ sketch corresponding to the $\lifthash$ algorithm. We again run our $\mathrm{multipleLiftHash}$ algorithm (Algorithm~\ref{alg:update_sketch_multiple_insertion}) on the initial $500$ dimensional sketch with the parameter $n$. We compare our methods with vanilla $\minhash$ by generating a $500$ dimensional sketch corresponding to the updated datasets after feature insertions.  

For computing the $\RMSE$, our ground truth is the pairwise Jaccard similarity on the original full-dimensional data. We measure it by computing the square root of the mean (over all pairs of sketches) of the square of the difference between the pairwise ground truth similarity and the corresponding similarity estimated from the sketch. A  lower RMSE is an indication of better performance. We compare the $\RMSE$ of our methods with that of vanilla $\minhash$ by generating a fresh $500$ dimensional sketch.  We summarise our results in Figure~\ref{fig:feature_insertion_exp_plot}.

\noindent\textbf{Insights:} Both of our algorithms offer comparable performance (under $\RMSE$) with respect to running $\minhash$ from scratch on the updated dimension. That is, our estimate of the Jaccard similarity is as accurate as the one obtained by computing $\minhash$  from scratch on the updated dimension.  Simultaneously, we obtain significant speedups in running time compared to running $\minhash$  from scratch. In particular, the speedup for $\mathrm{multipleLiftHash}$ is noteworthy (Table~\ref{tab:speedup}).

\subsection{Experiments for feature deletion:}\label{subsec:feature_deletion_exp}
We use the same metric as feature insertion experiments -- $\RMSE$ and
running time. For each dataset, we first create a $500$ dimensional $\minhash$ sketch using $\minhash$. Suppose we have a list of $n$ indices that denote the position where features need to be deleted. We then run $\drophash$ algorithm (Algorithm~\ref{alg:update_sketch_del}) after  each feature deletion.  We repeat this step $n$ times. This gives a $\minhash$ sketch corresponding to the $\drophash$ algorithm. We again run our $\mathrm{multipleDropHash}$ algorithm (Algorithm~\ref{alg:update_sketch_multiple_deletion}) on the initial $500$ dimensional sketch with the parameter $n$.  We compare our results with vanilla $\minhash$ by generating a fresh $500$ dimensional sketch on the updated dataset. We note the $\RMSE$ and running time as above. We summarise our results in Figure~\ref{fig:feature_deletion_exp_plot}.

\noindent\textbf{Insights:} Again, both our algorithms offer comparable performance (under $\RMSE$) with respect to running $\minhash$ from scratch. Similar to the previous case, we obtained a significant speedup in running time \textit{w.r.t.} computing $\minhash$  from scratch. In particular, the speedup obtained in $\mathrm{multipleDropHash}$ is quite prominent. We summarise a numerical speedup in Table~\ref{tab:speedup}.

\begin{remark}
Our current implementation of $\mathrm{multipleLiftHash}$ makes multiple passes over indices to be inserted, whereas $\mathrm{multipleDropHash}$ makes only one pass over the deleted indices. This is reflected in higher speedup values for $\mathrm{multipleDropHash}$ in Table \ref{tab:speedup}. We believe an optimized implementation for $\mathrm{multipleLiftHash}$ would further improve the speedup.

\end{remark}

\section{Conclusion and open questions}\label{sec:conclusion}
 

{\color{black} We present algorithms that make $\minhash$  adaptable to dynamic feature insertions and deletions of features. Our proposals' advantage is that they do not require generating fresh permutations to compute the updated sketch. Our algorithms take the current permutation (or its representation using universal hash function~\citep{DBLP:books/daglib/0023376}), $\minhash$ sketch, position, and the corresponding values of inserted/deleted features and output updated sketch. The running time of our algorithms remains linear in the number of inserted/deleted features. We comprehensively analyse our proposals and complement them with supporting experiments on several real-world datasets. Our algorithms are simple, efficient,  and accurately estimate the underlying pairwise Jaccard similarity.} Our work leaves the possibility of  several interesting open questions:
\begin{itemize}
\item extending our results for dense datasets in the case of feature insertions;
\item extending our algorithms for the case when features are inserted/deleted adversely;
\item improving our algorithms when we have prior information about the distribution of features; for example features distribution follows \textit{Zipf’s law} etc;
\item improving theoretical guarantees and obtaining further speedups by optimizing our algorithms.
\end{itemize}

\section*{Acknowledgement:}
We sincerely thank Biswadeep Sen for providing their valuable input on the initial draft of the paper. 

\bibliography{reference}
\pagebreak
\appendix
\end{document}